\title{
    Understanding Negative Samples
    \\ in Instance Discriminative Self-supervised Representation Learning
}
\author{%
    Kento Nozawa \\
    The University of Tokyo \& RIKEN AIP \\
    \texttt{nzw@g.ecc.u-tokyo.ac.jp} \\
    \And
    Issei Sato \\
    The University of Tokyo \\
    \texttt{sato@g.ecc.u-tokyo.ac.jp} \\
}
\begin{document}

\maketitle

\begin{abstract}
    Instance discriminative self-supervised representation learning has been attracted attention thanks to its unsupervised nature and informative feature representation for downstream tasks.
    In practice, it commonly uses a larger number of negative samples than the number of supervised classes.
    However, there is an inconsistency in the existing analysis;
    theoretically, a large number of negative samples degrade classification performance on a downstream supervised task, while empirically, they improve the performance.
    We provide a novel framework to analyze this empirical result regarding negative samples using the coupon collector's problem.
    Our bound can implicitly incorporate the supervised loss of the downstream task in the self-supervised loss by increasing the number of negative samples.
    We confirm that our proposed analysis holds on real-world benchmark datasets.
\end{abstract}

\section{Introduction}
\label{sec:introduction}

Self-supervised representation learning is a popular class of unsupervised representation learning algorithms in the domains of vision~\citep{Bachman2019NeurIPS,Chen2020ICML,He2020CVPR,Caron2020NeurIPS,Grill2020NeurIPS,Chen2021CVPR} and language~\citep{Mikolov2013NeurIPS,Devlin2019NAACL,Brown2020NeurIPS}.
Generally, it trains a feature extractor by solving a pretext task constructed on a large unlabeled dataset.
The learned feature extractor yields generic feature representations for other machine learning tasks such as classification.
Recent self-supervised representation learning algorithms help a linear classifier to attain classification accuracy comparable to a supervised method from scratch, especially in a few amount of labeled data regime~\citep{Newell2020CVPR,Henaff2020ICML,Chen2020NeurIPS}.
For example, \texttt{SwAV}~\citep{Caron2020NeurIPS} with ResNet-50 has a top-1 validation accuracy of $75.3\%$ on the ImageNet-1K classification~\citep{Deng2009CVPR} compared with $76.5\%$ by using the fully supervised method.

InfoNCE~\citep[Eq. 4]{Oord2018arXiv} or its modification is a de facto standard loss function used in many state-of-the-art self-supervised methods~\citep{Logeswaran2018ICLR,Bachman2019NeurIPS,He2020CVPR,Chen2020ICML,Henaff2020ICML,Caron2020NeurIPS}.
Intuitively, the minimization of InfoNCE can be viewed as the minimization of cross-entropy loss on $K+1$ instance-wise classification, where $K$ is the number of negative samples.
Despite the empirical success of self-supervised learning, we still do not understand why the self-supervised learning algorithms with InfoNCE perform well for downstream tasks.

\citet{Arora2019ICML} propose the first theoretical framework for contrastive unsupervised representation learning (CURL).
However, there exists a gap between the theoretical analysis and empirical observation as in~\cref{fig:comparison_bound_curve}.
Precisely, we expect that a large number of negative samples degrade a supervised loss on a downstream task from the analysis by~\citet{Arora2019ICML}.
In practice, however, a large number of negative samples are commonly used in self-supervised representation learning algorithms~\citep{He2020CVPR,Chen2020ICML}.

\paragraph{Contributions.}
We show difficulty to explain why large negative samples empirically improve supervised accuracy on the downstream task from the CURL framework when we use learned representations as feature vectors for the supervised classification in~\cref{sec:extended-curl}.
To fill the gap, we propose a novel lower bound to theoretically explain this empirical observation regarding negative samples using the coupon collector's problem in~\cref{sec:our-analysis}.

\section{InfoNCE-based Self-supervised Representations Learning}
\label{sec:preliminaries}

\begin{wrapfigure}[25]{R}{0.5\textwidth}
    \centering
    \vspace{-\intextsep}
    \includegraphics[width=0.5\textwidth]{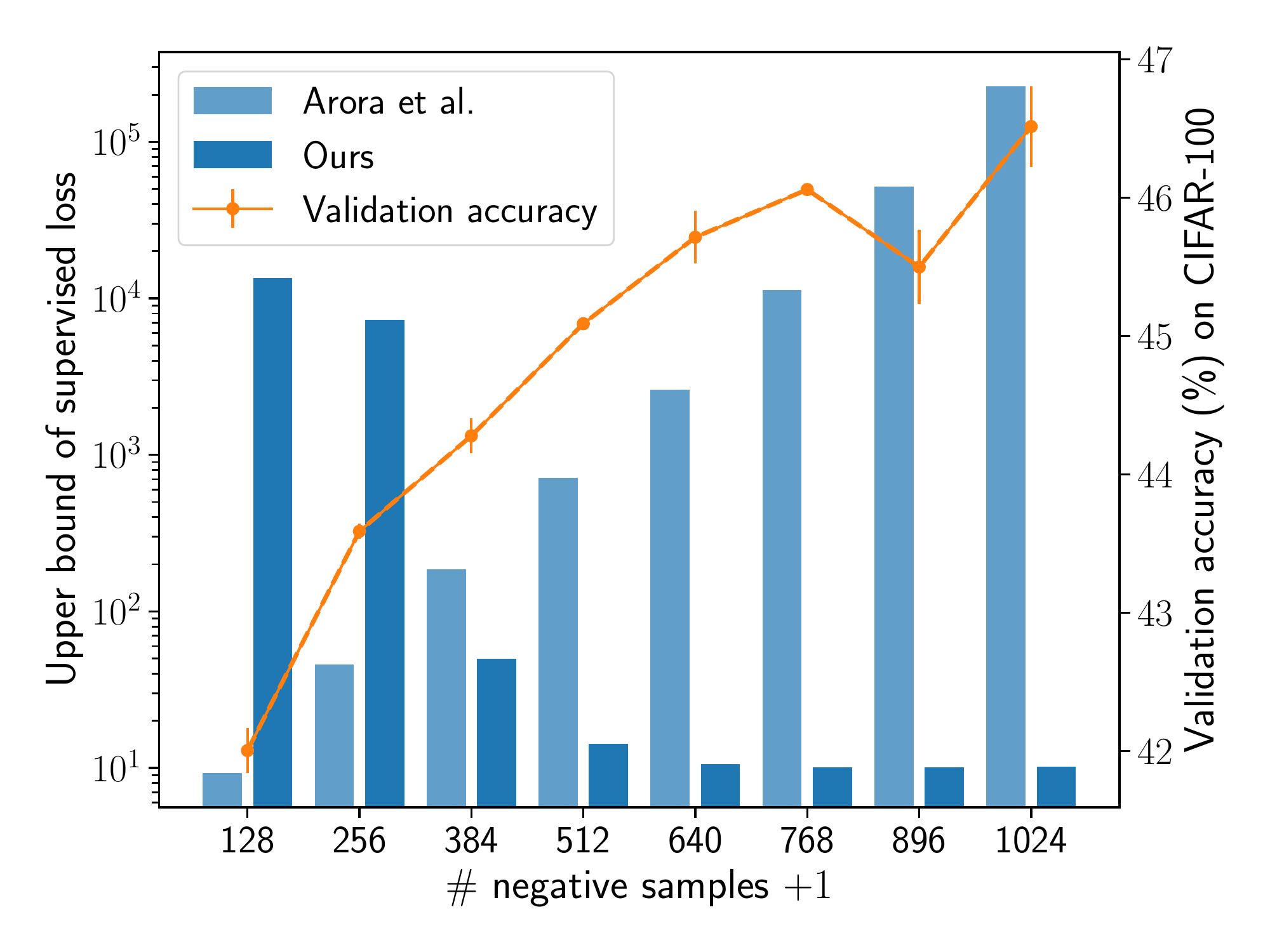}
    \caption{
        Upper bounds of supervised loss and validation accuracy on CIFAR-100.
        By increasing the number of negative samples, validation accuracy tends to improve.
        \textbf{Left bars}: The existing contrastive unsupervised representation learning bound~\eqref{eq:decompose-sup} also increases when the number of negative samples increases because the bound of supervised loss explodes due to a collision term that is not related to classification loss (see \cref{eq:decompose-sup} for the definition).
        \textbf{Right bars with $^\bigstar$}: On the other hand, the proposed counterpart~\eqref{eq:proposed-bound} does not explode.
        \cref{sec:experiments} describes the details of this experiment.
    }
    \label{fig:comparison_bound_curve}
\end{wrapfigure}

We focus on~\citet{Chen2020ICML}'s self-supervised representation learning formulation, namely, \texttt{SimCLR}.%
\footnote{In fact, our theoretical analysis is valid with asymmetric feature extractors such as \texttt{MoCo}~\citep{He2020CVPR}, where the positive and negative features do not come from feature extractor $\fbf$.}
Let $\Xcal$ be an input space, e.g., $\Xcal \subset \Rbb^{\mathrm{channel} \times \mathrm{width} \times \mathrm{height}}$ for color images.
We can only access a unlabeled training dataset $\{ \xbf_i \}_{i=1}^N$, where input $\xbf \in \Xcal$.
\texttt{SimCLR} learns feature extractor $\fbf: \Xcal \rightarrow \Rbb^h$ modeled by neural networks on the dataset, where $h$ is the dimensionality of feature representation.
Let $\zbf = \fbf(\abf(\xbf))$ that is a feature representation of $\xbf$ after applying data augmentation $\abf: \Xcal \rightarrow \Xcal$.
Data augmentation is a pre-defined stochastic function such as a composition of the horizontal flipping and cropping.
Note that the output of $\fbf$ is normalized by its L2 norm.
Let $\zbf^{+} = \fbf(\abf^{+}(\xbf))$ that is a positive feature representation created from $\xbf$ with different data augmentation $\abf^{+}(\cdot)$.

\texttt{SimCLR} minimizes the following InfoNCE-based loss with $K$ negative samples for each pair of $(\zbf, \zbf^{+})$:
\begin{align}
    \ell_{\mathrm{Info}}(\zbf, \Zbf)
    \coloneqq -
    \ln \frac{
        \exp \left(\zbf \cdot \zbf^{+} / t \right)
        }{
            \sum_{\zbf_k \in \Zbf}
            \exp \left(\zbf \cdot \zbf_k / t \right)}
    ,
    \label{eq:NT-Xent}
\end{align}
where $\Zbf = \{\zbf^{+}, \zbf^-_{1}, \ldots, \zbf^-_{K}\}$ that is a set of positive representation $\zbf^{+}$ and $K$ negative representations $\{\zbf^-_{1} \ldots, \zbf^-_{K}\}$ created from other samples, $(. \cdot .)$ is an inner product of two representations, and $t \in \Rbb_+$ is a temperature parameter.%
\footnote{We perform the analysis with $t=1$ for simplicity, but our analysis holds with any temperature.}
Intuitively, this loss function approximates an instance-wise classification loss by using $K$ random negative samples~\citep{Wu2018CVPR}.
From the definition of~\cref{eq:NT-Xent}, we expect that $\fbf$ learns an invariant encoder with respect to data augmentation $\abf$.
After minimizing~\cref{eq:NT-Xent}, $\fbf$ works as a feature extractor for downstream tasks such as classification.

\section{Extension of Contrastive Unsupervised Representation Learning Framework}
\label{sec:extended-curl}
We focus on the direct relationship between the self-supervised loss and a supervised loss to understand the role of the number of negative samples $K$.
For a similar problem, the CURL framework~\citep{Arora2019ICML} shows that the averaged supervised loss is bounded by the contrastive unsupervised loss.
Thus, we extend the analysis of CURL to the self-supervised representation learning in~\cref{sec:preliminaries} and point out that we have difficulty explaining the empirical observation as in~\cref{fig:comparison_bound_curve} from the existing analysis.
\cref{tab:notations} in~\cref{sec:appenxi-notation} summarizes notations used in this paper for convenience.

\subsection{CURL Formulation for \texttt{SimCLR}}
By following the CURL analysis~\citep{Arora2019ICML},
we introduce the learning process in two steps: \textit{self-supervised representation learning step} and \textit{supervised learning step} (see~\cref{sec:self-sup-step} and~\cref{sec:supervised-learning}, respectively).

\subsubsection{Self-supervised representation learning step}
\label{sec:self-sup-step}
The purpose of the self-supervised learning step is to learn a feature extractor $\fbf$ on an unlabeled dataset.
During this step, we can only access input samples from $\Xcal$ without any relationship between samples, unlike metric learning~\citep{Kulis2012} or similar unlabeled learning~\citep{Bao2018ICML}.

We formulate the data generation process for~\cref{eq:NT-Xent}.
The key idea of the CURL analysis is the existence of latent classes $\Ccal$ that are associated with supervised classes.%
\footnote{
    Technically, we do not need $c$ to draw $\xbf$ for self-supervised representation learning. However, we explicitly include it in the data generation process to understand the relationship with a supervised task in~\cref{sec:simple-analysis}.
    In the main analysis, we assume that the supervised classes in the downstream task are subset of $\Ccal$; however, different relationship between supervised and latent classes is discussed in~\cref{sec:details-latent-class}.
}
Let $\rho$ be a probability distribution over $\Ccal$,
and let $\xbf$ be an input sample drawn from a data distribution $\Dc$ conditioned on a latent class $c \in \Ccal$.
We draw two data augmentations $(\abf, \abf^{+})$ from the distribution of data augmentation $\Acal$ and apply them independently to the sample.
As a result, we have two augmented samples $(\abf(\xbf), \abf^{+}(\xbf))$ and call $\abf^{+}(\xbf)$ as a positive sample of $\abf(\xbf)$.
Similarly, we draw $K$ negative samples from $\mathcal{D}_{c^-_k}$ for each $c^-_k \in \{c^-_1, \ldots, c^-_K\} \sim \rho^{K}$
and $K$ data augmentations from $\Acal$, and then we apply them to negative samples.
\cref{definition:data-generation} summarizes the data generation process.
Note that we suppose data augmentation $\abf \sim \Acal$ does not change the latent class of $\xbf \in \Xcal$.
\begin{definition}[Data Generation Process in Self-supervised Representation Learning Step]
    \;
        \\
        1. Draw latent classes: $c, \{c_k^-\}_{k=1}^K \sim \rho^{K+1}$;
        \hspace{5mm} 2. Draw input sample: $\xbf \sim \Dc$; \\
        3. Draw data augmentations: $(\abf, \abf^{+}) \sim \Acal^{2}$;
        \hspace{5mm} 4. Apply data augmentations: $\abf(\xbf), \abf^{+}(\xbf)$; \\
        5. Draw negative samples: $\{\xbf^-_k \}_{k=1}^K \sim \Dcal_{c^-_k}^{K}$;
        \hspace{4mm} 6. Draw data augmentations: $\{ \abf^-_k \}_{k=1}^K \sim \Acal^{K}$; \\
        7. Apply data augmentations: $\{ \abf^-_k(\xbf^-_k) \}_{k=1}^K$.
    \label{definition:data-generation}
\end{definition}
Note that the original data generation process of CURL samples a positive sample of $\xbf$ from $\mathcal{D}_c$ independently and does not use any data augmentations.

From the data generation process and InfoNCE loss~\eqref{eq:NT-Xent},
we give the following formal definition of self-supervised loss.
\begin{definition}[Expected Self-supervised Loss]
    \begin{align}
        \Lin(\fbf)
        \coloneqq
        \Ebb_{
            \substack{
                c, \{c_k^-\}_{k=1}^K
                \sim \rho^{K+1}
            }
        }
        \Ebb_{
            \substack{
                \xbf \sim \Dc
                \\
                (\abf, \abf^{+}) \sim \Acal^{2}
            }
        }
        \Ebb_{
            \substack{
                \{ \xbf^-_k \sim \Dcal_{c^-_k} \}_{k=1}^K
                \\
                \{ \abf_k^- \}_{k=1}^K \sim \Acal^{K}
            }
        }
        \ell_{\mathrm{Info}}(\zbf, \Zbf),
        \label{eq:test-self-sup-loss}
    \end{align}
    \label{definition:self-supervised-loss}
    where recall that $\zbf = \fbf(\abf(\xbf))$ and $\Zbf = \{ \fbf(\abf^{+}(\xbf)), \fbf(\abf_1^{-}(\xbf^{-}_1)), \ldots, \fbf(\abf_K^{-}(\xbf^{-}_K))\}$.
\end{definition}
Since we cannot directly minimize~\cref{eq:test-self-sup-loss},
we minimize the empirical counterpart, $\widehat{L}_{\mathrm{Info}}(\fbf)$, by sampling from the training dataset.
After minimizing $\widehat{L}_{\mathrm{Info}}(\fbf)$, learned $\widehat{\fbf}$ works as a feature extractor in the supervised learning step.

\subsubsection{Supervised learning step}
\label{sec:supervised-learning}

At the supervised learning step, we can observe label $y \in \Ycal = \{1, \ldots, Y \}$ for each $\xbf$ that is used in the self-supervised learning step.
Let $\Scal$ be a supervised data distribution over $\Xcal \times \Ycal$,
and $\gbf \circ \widehat{\fbf}: \Xcal \rightarrow \Rbb^Y$ be a classifier, where $\gbf: \Rbb^{h} \rightarrow \Rbb^{Y}$ and $\widehat{\fbf}$ is the frozen feature extractor.
Given the supervised data distribution and classifier, our goal is to minimize the following supervised loss:
\begin{align}
    {L}_\mathrm{sup}(\gbf \circ \widehat{\fbf})
    \coloneqq
    \Ebb_{\substack{
        \xbf, y \sim \Scal \\
        \abf \sim \Acal
    }}
    - \ln \frac{\exp \left(\gbf_y(\widehat{\fbf}(\abf(\xbf))) \right) }{
        \sum_{j \in \Ycal}
        \exp \left( \gbf_j(\widehat{\fbf}(\abf(\xbf))) \right) }.
    \label{eq:supervised-loss}
\end{align}

By following the CURL analysis, we introduce a mean classifier as a simple instance of $\gbf$ because its loss is an upper bound of~\cref{eq:supervised-loss}.%
\footnote{Concrete inequality is found in \cref{sec:mean-classifier-and-linear-classifier}.}
The mean classifier is the linear classifier whose weight of label $y$ is computed by averaging representations: $\mubf_y = \Ebb_{\xbf \sim \Dy} \Ebb_{\abf \sim \Acal} {\fbf}(\abf(\xbf))$, where $\Dy$ is a data distribution conditioned on the supervise label $y$.
We introduce the definition of the mean classifier's supervised loss as follows:
\begin{definition}[Mean Classifier's Supervised Loss]
    \begin{align}
        \Lsup^\mu(\widehat{\fbf}) \coloneqq
        \Ebb_{\substack{
            \xbf, y \sim \Scal \\
            \abf \sim \Acal
        }}
        -
        \ln \frac{\exp \left( \widehat{\fbf}(\abf(\xbf)) \cdot \boldsymbol{\mu}_{y} \right)
        }{
                \sum_{j \in \Ycal}
                \exp \left( \widehat{\fbf}(\abf(\xbf)) \cdot \boldsymbol{\mu}_j \right)
        }.
    \label{eq:mean-classifier-loss}
    \end{align}
\end{definition}
We also introduce a sub-class loss function.%
\footnote{\citet{Arora2019ICML} refer to this loss function as \textit{averaged supervised loss}.}
Let $\Ycal_\mathrm{sub}$ be a subset of $\Ycal$ and $\Scal_{\mathrm{sub}}$ be a data distribution over $\Xcal \times \Ycal_\mathrm{sub}$,
then we define the sub-class losses of classifier $\gbf$ and mean classifier:
\begin{definition}[Supervised Sub-class Losses of Classifier $\gbf$ and Mean Classifier with $\widehat{\fbf}$]
    \begin{align}
        {L}_\mathrm{sub}(\gbf \circ \widehat{\fbf}, \Ycal_\mathrm{sub})
        & \coloneqq
        \Ebb_{\substack{
            \xbf, y \sim \Scal_{\mathrm{sub}} \\
            \abf \sim \Acal
        }}
        - \ln \frac{\exp \left( \gbf_y(\widehat{\fbf}(\abf(\xbf))) \right) }{
            \sum_{j \in \Ycal_\mathrm{sub}}
            \left( \gbf_j(\widehat{\fbf}(\abf(\xbf))) \right)
        },
        \label{eq:g-sub-class-loss}
        \\
        {L}^{\mu}_\mathrm{sub}(\widehat{\fbf}, \Ycal_\mathrm{sub})
        & \coloneqq
        \Ebb_{\substack{
            \xbf, y \sim \Scal_{\mathrm{sub}} \\
            \abf \sim \Acal
        }}
        - \ln \frac{
                \exp \left( \widehat{\fbf} (\abf(\xbf)) \cdot \mubf_{y} \right)
            }{
                \sum_{j \in \Ycal_\mathrm{sub}}
                \exp \left( \widehat{\fbf} (\abf(\xbf)) \cdot \mubf_{j} \right)
            }.
        \label{eq:sub-class-loss}
    \end{align}
    \label{definition:sub-class-mean-classifier}
\end{definition}

The purpose of unsupervised representation learning~\citep{Bengio2013IEEE} is to learn generic feature representation rather than improve the accuracy of the classifier on the same dataset.
However, we believe that such a feature extractor tends to transfer well to another task.
Indeed, \citet{Kornblith2019CVPR} empirically show a strong correlation between ImageNet's accuracy and transfer accuracy.

\subsection{Theoretical Analysis based on CURL}
\label{sec:simple-analysis}
We show that InfoNCE loss~\eqref{eq:test-self-sup-loss} is an upper bound of the expected sub-class loss of the mean classifier~\eqref{eq:sub-class-loss}.

\paragraph{Step 1. Introduce a lower bound}
We denote $\mubf(\xbf) = \Ebb_{\abf \sim \Acal } \fbf(\abf(\xbf))$ and derive a lower bound of unsupervised loss $\Lin(\fbf)$.
\begin{align}
    \Lin(\fbf)
    & \geq
    \Ebb_{
        c, \{c_k^-\}_{k=1}^K \sim \rho^{K+1}
    }
    \hspace{1em}
    \Ebb_{\substack{
        \xbf \sim \Dc \\
        \abf \sim \Acal \\
        }
    }
    \Ebb_{
        \{ \xbf^-_k \sim \Dcal_{c^-_k} \}_{k=1}^K
    }
    \ell_{\mathrm{Info}} \left(
        \fbf(\abf(\xbf)),
        \left\{
            \mubf(\xbf),
            \mubf(\xbf_{1}^-),
            \ldots,
            \mubf(\xbf_{K}^-)
        \right\}
    \right)
    \nonumber
    \\
    & \geq
    \Ebb_{
        c, \{c_k^-\}_{k=1}^K \sim \rho^{K+1}
    }
    \hspace{1em}
    \Ebb_{
        \substack{
            \xbf \sim \Dc \\
            \abf \sim \Acal
        }
    }
    \ell_{\mathrm{Info}} \left(
        \fbf(\abf(\xbf)),
        \left\{
            \mubf(\xbf),
            \mubf_{c^-_1},
            \ldots,
            \mubf_{c^-_K}
        \right\}
    \right)
    \nonumber
    \\
    & \geq
    \Ebb_{
        c, \{c_k^-\}_{k=1}^K \sim \rho^{K+1}
    }
    \hspace{1em}
    \Ebb_{
        \substack{
            \xbf \sim \Dc \\
            \abf \sim \Acal
        }
    }
    \ell_{\mathrm{Info}}\left(
        \fbf(\abf(\xbf)),
        \left\{
            \mubf_{c},
            \mubf_{c^-_1},
            \ldots,
            \mubf_{c^-_K}
        \right\}
    \right)
    +
    d(\fbf),
    \label{eq:mean-supervised}
    \\
    & \text{where }
    d(\fbf) = \frac{1}{t}
    \Ebb_{\substack{c \sim \rho}}
    \Ebb_{\substack{\xbf \sim \Dc }}
    \left \{
        \Ebb_{\abf \sim \Acal }
        \left[ \fbf(\abf(\xbf)) \right]
        \cdot \left[
            \Ebb_{\substack{\xbf^+ \sim \Dc \\ \abf_1^+ \sim \Acal }}
            \left[ \fbf(\abf_1^+(\xbf^+)) \right]
            -
            \Ebb_{\abf_2^+ \sim \Acal}
            \left[ \fbf(\abf_2^+(\xbf)) \right]
        \right]
    \right\}.
    \nonumber
\end{align}
The first and second inequalities are done by using Jensen's inequality for convex function.
The proof of the third inequality is shown in~\cref{sec:proof-mean-sup}.
It is worth noting that positive and negative features can be extracted from another feature encoder or memory back~\citet{He2020CVPR}.

\begin{remark}[Effect of gap term $d(\fbf)$]
    We confirm that $d(\fbf)$ is an almost constant among different $K$ in practice (see~\cref{tab:bounds}).
    Therefore we focus on the first term in~\cref{eq:mean-supervised} in the following analysis.
\end{remark}

\paragraph{Step 2. Decomposition into the expected sub-class loss}
We convert the first term of~\cref{eq:mean-supervised} into the expected sub-class loss explicitly.
By following~\citet[Theorem B.1]{Arora2019ICML}, we introduce
collision probability: $\tau_K = \Pbb(\mathrm{Col}(c, \{c^-_k\}_{k=1}^{K} ) \neq 0)$,
where $\mathrm{Col}(c, \{c^-_k\}_{k=1}^{K}) = \sum_{k=1}^K \Ibb[c = c^-_k]$ and $\Ibb[\cdot]$ is the indicator function.
We omit the arguments of $\mathrm{Col}$ for simplicity.
Let $\Ccal_{\mathrm{sub}}(\{c, c^-_1, \ldots, c^-_K\})$ be a function to remove duplicated latent classes given latent classes.
We omit the arguments of $\Ccal_{\mathrm{sub}}$ as well.
The result is our extension of~\citet[Lemma 4.3]{Arora2019ICML} for self-supervised representation learning as the following proposition:
\begin{restatable}[CURL Lower Bound of Self-supervised Loss]{proposition}{CURLLowerBound}
    \label{proposition:curl-lower-bound}
    For all feature extractor $\fbf$,
    \begin{align}
        L_{\mathrm{Info}}(\fbf)
        \geq (1-\tau_K)
        &
        \Ebb_{\substack{c, \{c^-_k \}_{k=1}^K \sim \rho^{K+1} }}
        [
            \underbrace{
                L^{\mu}_{\mathrm{sub}}(\fbf, \Ccal_{\mathrm{sub}} )
            }_{\text{sub-class loss}}
            \mid \mathrm{Col} = 0
        ]
        \nonumber
        \\
        +
        \tau_K
        &
        \Ebb_{\substack{c, \{c^-_k \}_{k=1}^K \sim \rho^{K+1} }}
        [
            \underbrace{
                \ln (\mathrm{Col} + 1)
            }_{\text{collision}}
        \mid \mathrm{Col} \neq 0 ]
        + d(\fbf).
        \label{eq:decompose-sup}
    \end{align}
\end{restatable}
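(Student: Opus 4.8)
The plan is to treat the chain of inequalities in~\cref{eq:mean-supervised} as already established by Step~1, so that it remains only to rewrite its leading term. Abbreviate
\[
    G(\fbf) \;=\; \Ebb_{c,\{c^-_k\}_{k=1}^K\sim\rho^{K+1}}\;\Ebb_{\xbf\sim\Dc,\;\abf\sim\Acal}\;\ell_{\mathrm{Info}}\!\bigl(\fbf(\abf(\xbf)),\,\{\mubf_c,\mubf_{c^-_1},\dots,\mubf_{c^-_K}\}\bigr),
\]
so that \cref{eq:mean-supervised} reads $L_{\mathrm{Info}}(\fbf)\ge G(\fbf)+d(\fbf)$, and it suffices to lower bound $G(\fbf)$ by the first two terms on the right-hand side of~\cref{eq:decompose-sup}.

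First I would split $G(\fbf)$ by conditioning on the collision event. Since $\mathrm{Col}(c,\{c^-_k\}_{k=1}^K)=0$ has probability $1-\tau_K$ by definition of $\tau_K$, the law of total expectation gives
\[
    G(\fbf) = (1-\tau_K)\,\Ebb\!\left[\,\Ebb_{\xbf\sim\Dc,\abf\sim\Acal}\ell_{\mathrm{Info}}(\cdot)\;\middle|\;\mathrm{Col}=0\right] + \tau_K\,\Ebb\!\left[\,\Ebb_{\xbf\sim\Dc,\abf\sim\Acal}\ell_{\mathrm{Info}}(\cdot)\;\middle|\;\mathrm{Col}\neq 0\right],
\]
where the outer expectations are over $c,\{c^-_k\}_{k=1}^K$. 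I then handle the two conditional expectations separately.

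For the no-collision term, when $\mathrm{Col}=0$ the latent classes $c,c^-_1,\dots,c^-_K$ are pairwise distinct, so $\Ccal_{\mathrm{sub}}$ is precisely this $(K+1)$-element set and the softmax in $\ell_{\mathrm{Info}}$ ranges over exactly $\Ccal_{\mathrm{sub}}$ with $c$ playing the role of the true label. The key point is that $(c,c^-_1,\dots,c^-_K)$ are i.i.d.\ from $\rho$ and hence exchangeable, so conditioned on the unordered set $\Ccal_{\mathrm{sub}}$ and on $\{\mathrm{Col}=0\}$ every ordering is equally likely and the positive class is \emph{uniform} over $\Ccal_{\mathrm{sub}}$; since $\xbf\sim\Dc$ conditioned on the positive class $c$ is the same as drawing $\xbf$ from the class-conditional distribution of the chosen label, averaging $\ell_{\mathrm{Info}}$ over this conditional randomness reproduces exactly $L^\mu_{\mathrm{sub}}(\fbf,\Ccal_{\mathrm{sub}})$ as in~\cref{eq:sub-class-loss}, and taking the remaining expectation over $\Ccal_{\mathrm{sub}}$ yields $\Ebb[L^\mu_{\mathrm{sub}}(\fbf,\Ccal_{\mathrm{sub}})\mid\mathrm{Col}=0]$. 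This exchangeability/relabeling bookkeeping---lining up the data-generation process of~\cref{definition:data-generation} with the label marginal implicit in $\Scal_{\mathrm{sub}}$---is the step I expect to be the main obstacle, since it is precisely where our augmentation-based process must be reconciled with \citet[Lemma 4.3]{Arora2019ICML}.

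For the collision term, I would bound $\ell_{\mathrm{Info}}$ pointwise. Writing $\zbf=\fbf(\abf(\xbf))$, the denominator of the softmax in $\ell_{\mathrm{Info}}(\zbf,\{\mubf_c,\mubf_{c^-_1},\dots,\mubf_{c^-_K}\})$ equals $e^{\zbf\cdot\mubf_c}+\sum_{k=1}^K e^{\zbf\cdot\mubf_{c^-_k}} = (1+\mathrm{Col})\,e^{\zbf\cdot\mubf_c}+\sum_{k:\,c^-_k\neq c} e^{\zbf\cdot\mubf_{c^-_k}} \ge (1+\mathrm{Col})\,e^{\zbf\cdot\mubf_c}$, because exactly $\mathrm{Col}$ of the negative classes coincide with $c$ and the remaining exponentials are nonnegative; hence $\ell_{\mathrm{Info}}(\zbf,\{\mubf_c,\dots\})\ge \ln(1+\mathrm{Col})$ regardless of $\xbf$ and $\abf$, so $\Ebb[\ell_{\mathrm{Info}}(\cdot)\mid\mathrm{Col}\neq 0]\ge\Ebb[\ln(1+\mathrm{Col})\mid\mathrm{Col}\neq 0]$. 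Substituting the two case bounds into the total-expectation identity for $G(\fbf)$ and adding back $d(\fbf)$ from~\cref{eq:mean-supervised} gives~\cref{eq:decompose-sup}. I would also add a brief remark that the only genuinely new ingredient relative to \citet{Arora2019ICML} is carrying the gap term $d(\fbf)$ through, which is why the resulting bound has the same shape as their Lemma~4.3.
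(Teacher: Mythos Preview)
Your overall strategy---start from \cref{eq:mean-supervised}, split on the collision event, and handle the two cases separately---is the same as the paper's, and your treatment of the collision branch (dropping the non-$c$ exponentials to get $\ell_{\mathrm{Info}}\ge\ln(1+\mathrm{Col})$) is exactly what the paper does.

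The no-collision branch, however, contains a genuine error. You write that ``when $\mathrm{Col}=0$ the latent classes $c,c^-_1,\dots,c^-_K$ are pairwise distinct, so $\Ccal_{\mathrm{sub}}$ is precisely this $(K+1)$-element set.'' This is false: by definition $\mathrm{Col}(c,\{c^-_k\})=\sum_{k}\Ibb[c=c^-_k]$ only counts collisions between the anchor class $c$ and the negatives, not collisions \emph{among} the negatives. So $\mathrm{Col}=0$ merely says $c\notin\{c^-_1,\dots,c^-_K\}$; the negatives can (and for large $K$ typically will) repeat, and then the softmax in $\ell_{\mathrm{Info}}$ contains repeated terms and does \emph{not} coincide with the sub-class loss over $\Ccal_{\mathrm{sub}}$. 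The paper handles this by an inequality, not an equality: it invokes monotonicity of $\ell_{\mathrm{Info}}$ in the negative set to \emph{drop} the duplicated negative classes, which lowers the loss and yields $\ell^\mu_{\mathrm{sub}}(\zbf,c,\Ccal_{\mathrm{sub}})$ as a lower bound. That monotonicity step is missing from your argument.

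This error also undermines the exchangeability justification you give for identifying the inner expectation with $L^\mu_{\mathrm{sub}}(\fbf,\Ccal_{\mathrm{sub}})$. Your argument that ``the positive class is uniform over $\Ccal_{\mathrm{sub}}$'' relies on the $K+1$ draws being distinct; once the negatives may repeat, conditioning on $\mathrm{Col}=0$ and on the \emph{set} $\Ccal_{\mathrm{sub}}$ does not force the anchor to be uniform on $\Ccal_{\mathrm{sub}}$ (already with $K=2$ and a non-uniform $\rho$ one can check this fails). After inserting the monotonicity step, you still need to argue---as in \citet[Lemma~4.3]{Arora2019ICML}---that the remaining expectation matches the label marginal implicit in $\Scal_{\mathrm{sub}}$; the paper simply cites this identification rather than re-deriving it.
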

The proof is found in~\cref{sec:proof-proposition}.

\citet{Arora2019ICML} also show Rademacher complexity-based generalization error bound for the mean classifier.
Since we focus on the behavior of the self-supervised loss rather than a generalization error bound when $K$ increases, we do not perform further analysis as was done in~\citet{Arora2019ICML}.
Nevertheless, we believe that constructing a generalization error bound by following either~\citet[Theorem 4.1]{Arora2019ICML} or~\citet[Theorem 7]{Nozawa2020UAI} is worth interesting future work.

\subsection{Limitations of~\cref{eq:decompose-sup}}
\label{sec:limitation-curl}
The bound~\eqref{eq:decompose-sup} tells us that $K$ gives a trade-off between the sub-class loss and collision term via $\tau_K$.
Recall that our final goal is to minimize the supervised loss~\eqref{eq:mean-classifier-loss} rather than expected sub-class loss~\eqref{eq:sub-class-loss}.
To incorporate the supervised loss~\eqref{eq:mean-classifier-loss} into the lower bound~\eqref{eq:decompose-sup}, we need $K$ large enough to satisfy $\Ccal_{\mathrm{sub}} \supseteq \Ycal$.
However, such a large $K$ makes the lower bound meaningless.

We can easily observe that the lower bound~\eqref{eq:decompose-sup} converges to the collision term by increasing $K$ since the collision probability $\tau_K$ converges to $1$.
As a result, the sub-class loss rarely contributes to the lower bound.
Let us consider the bound on CIFAR-10, where the number of supervised classes is $10$, and latent classes are the same as the supervised classes.
When $\tau_{K=32} \approx 0.967$, i.e., the only $3.3\%$ training samples contribute to the expected sub-class loss,
the others fall into the collision term.
Indeed, \citet{Arora2019ICML} show that small latent classes or large negative samples degrade classification performance of the expected sub-class classification task.
However, even much larger negative samples, $K+1=512$, yield the best performance of a linear classifier with self-supervised representation on CIFAR-10~\citep[B.9]{Chen2020ICML}.

\section{Proposed Lower Bound for Instance-wise Self-supervised Representation Learning}
\label{sec:our-analysis}

To fill the gap between the theoretical bound~\eqref{eq:decompose-sup} and empirical observation from recent work,
we propose another lower bound for self-supervised representation learning.
The key idea of our bound is to replace $\tau$ with a different probability since $\tau$ can quickly increase depending on $K$.
The idea is motivated by focusing on the supervised loss rather than the expected sub-class loss.

\subsection{Proposed Lower Bound}
\label{sec:upsilon-decompose}
Let $\upsilon_{K}$ be a probability that sampled $K$ latent classes contain all latent classes: $\{c_{k} \}_{k=1}^K \supseteq \Ccal$.
This probability appears in the coupon collector's problem of probability theory (e.g., \citet[Example 2.2.7]{Durrett2019Book}).
If the latent class probability is uniform: $\forall c, \rho(c) = 1 / |\Ccal|$,
then we can calculate the probability explicitly as follows.
\begin{definition}[Probability to Draw All Latent Classes]
    Assume that $\rho$ is a uniform distribution over latent classes $\Ccal$.
    The probability that $K$ latent classes drawn from $\rho$ contain all latent classes is defined as
    \begin{align}
        \upsilon_K &\coloneqq
        \sum^K_{n=1} \sum_{m=0}^{|\Ccal|-1} {|\Ccal|-1 \choose m} (-1)^m \left(1 - \frac{m+1}{|\Ccal|} \right)^{n-1},
        \label{eq:coupon-probability}
    \end{align}
    where the first summation is a probability that $n$ drawn latent samples contain all latent classes~\citep[Eq. 2]{Nakata2006}.%
    \footnote{We show expected $K+1$ to draw all supervised labels for ImageNet-1K and all used datasets in~\cref{sec:expeted_number}.}
\end{definition}

By replacing $\tau$ with $\upsilon$, we obtain our lower bound of InfoNCE loss:
\begin{restatable}[Proposed Lower Bound of Self-supervised Loss]{theorem}{ProposedBound}
    \label{theorem:proposed-lower-bound}
    For all feature extractor $\fbf$,
    \begin{align}
        \Lin(\fbf)
        \geq
        \frac{1}{2}
        \Big\{
        \upsilon_{K+1}
        &
        \Ebb_{\substack{c, \{c^-_k\}_{k=1}^K \sim \rho^{K+1} }}
            [
            \underbrace{
                L_{\mathrm{sub}}^{\mu}(\fbf, \Ccal)
            }_{\text{sup. loss}}
            \mid \Ccal_{\mathrm{sub}} = \Ccal
            ]
        \nonumber
        \\
        +
        (1-\upsilon_{K+1})
        &
        \Ebb_{\substack{c, \{c^-_k\}_{k=1}^K \sim \rho^{K+1} }}
        [
        \underbrace{
            L^{\mu}_{\mathrm{sub}}( \fbf, \Ccal_{\mathrm{sub}} )
        }_{\text{sub-class loss}
        }
        \mid \Ccal_{\mathrm{sub}} \neq \Ccal
        ]
        \nonumber
        \\
        +
        &
        \Ebb_{\substack{c, \{c^-_k\}_{k=1}^K \sim \rho^{K+1}
        }
        }
        \underbrace{
            \ln (\mathrm{Col} + 1)
        }_{\text{collision}}
        \Big\} + d(\fbf).
        \label{eq:proposed-bound}
    \end{align}
\end{restatable}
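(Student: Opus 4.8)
The plan is to start from the lower bound already obtained in~\eqref{eq:mean-supervised},
\[
    \Lin(\fbf) \;\ge\; \Ebb_{c, \{c^-_k\}_{k=1}^K \sim \rho^{K+1}} \; \Ebb_{\substack{\xbf \sim \Dc \\ \abf \sim \Acal}} \; \ell_{\mathrm{Info}}\!\big(\fbf(\abf(\xbf)), \{\mubf_{c}, \mubf_{c^-_1}, \ldots, \mubf_{c^-_K}\}\big) \;+\; d(\fbf),
\]
so that everything that remains is a pointwise manipulation of $\ell_{\mathrm{Info}}$ evaluated at the mean-classifier weights, exactly as in the proof of~\cref{proposition:curl-lower-bound}; the only new ingredient is how we split the denominator. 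Writing $\zbf = \fbf(\abf(\xbf))$ and pulling the anchor term out of~\eqref{eq:NT-Xent} gives $\ell_{\mathrm{Info}}(\zbf, \{\mubf_{c}, \ldots\}) = \ln\!\big(1 + \sum_{k=1}^K \exp(\zbf \cdot (\mubf_{c^-_k} - \mubf_c))\big)$. I would then group the $K$ summands into those with $c^-_k = c$, which equal $1$ and total $\mathrm{Col}$, and those with $c^-_k \ne c$; keeping only one summand per distinct class among the latter (legal because the discarded summands are positive) gives $\ell_{\mathrm{Info}}(\zbf, \{\mubf_{c}, \ldots\}) \ge \ln\!\big(1 + \mathrm{Col} + \sum_{j \in \Ccal_{\mathrm{sub}} \setminus \{c\}} \exp(\zbf \cdot (\mubf_j - \mubf_c))\big)$.

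The heart of the argument is to break this single logarithm into a collision part and a classification part. For $a, b \ge 0$ one has $(1+a+b)^2 - (1+a)(1+b) = a^2 + b^2 + a + b + ab \ge 0$, hence $\ln(1+a+b) \ge \tfrac12\ln(1+a) + \tfrac12\ln(1+b)$. Applying this with $a = \mathrm{Col}$ and $b = \sum_{j \in \Ccal_{\mathrm{sub}} \setminus \{c\}} \exp(\zbf \cdot (\mubf_j - \mubf_c))$, and using $\sum_{j \in \Ccal_{\mathrm{sub}}} \exp(\zbf \cdot \mubf_j) = \exp(\zbf \cdot \mubf_c)\,(1+b)$, I obtain the pointwise inequality
\[
    \ell_{\mathrm{Info}}\!\big(\zbf, \{\mubf_{c}, \mubf_{c^-_1}, \ldots, \mubf_{c^-_K}\}\big) \;\ge\; \tfrac12\ln(\mathrm{Col}+1) \;-\; \tfrac12\ln\frac{\exp(\zbf \cdot \mubf_c)}{\sum_{j \in \Ccal_{\mathrm{sub}}} \exp(\zbf \cdot \mubf_j)}.
\]

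Next I take $\Ebb_{c,\{c^-_k\}}\Ebb_{\xbf \sim \Dc,\,\abf \sim \Acal}$ of both sides. The first term does not depend on $(\xbf,\abf)$ and contributes $\tfrac12\Ebb_{c,\{c^-_k\}}[\ln(\mathrm{Col}+1)]$, i.e.\ the collision term of~\eqref{eq:proposed-bound}. For the second term, conditioning on the realized set $\Ccal_{\mathrm{sub}}$ of distinct latent classes and identifying the conditional law of the anchor class and the augmented sample with the sub-class data distribution $\Scal_{\mathrm{sub}}$ --- the same bookkeeping that underlies~\cref{proposition:curl-lower-bound} --- turns the inner expectation into $L^{\mu}_{\mathrm{sub}}(\fbf, \Ccal_{\mathrm{sub}})$ of~\eqref{eq:sub-class-loss}. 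It then remains to split $\Ebb_{c,\{c^-_k\}}\big[L^{\mu}_{\mathrm{sub}}(\fbf,\Ccal_{\mathrm{sub}})\big]$ according to whether $\Ccal_{\mathrm{sub}} = \Ccal$ or $\Ccal_{\mathrm{sub}} \ne \Ccal$: since $\Ccal_{\mathrm{sub}} \subseteq \Ccal$ always, and $\Ccal_{\mathrm{sub}} = \Ccal$ precisely when the $K+1$ draws $c, c^-_1, \ldots, c^-_K$ cover every latent class, the former event has probability $\upsilon_{K+1}$ by the coupon-collector formula~\eqref{eq:coupon-probability}. Assembling the two conditional expectations with the collision term and adding back $d(\fbf)$ reproduces~\eqref{eq:proposed-bound}.

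I expect the difficulty to sit not in the estimates --- the $\ln$-splitting inequality is elementary and the deletion of duplicate negatives only uses positivity --- but in the conditioning step: one has to check carefully that fixing $\Ccal_{\mathrm{sub}}$ (and then restricting to $\{\Ccal_{\mathrm{sub}} = \Ccal\}$) leaves the inner expectation over $(\xbf, \abf, c)$ in precisely the form of $L^{\mu}_{\mathrm{sub}}(\fbf,\cdot)$, which is why we must inherit the convention for $\Scal_{\mathrm{sub}}$ fixed for~\cref{proposition:curl-lower-bound}. A secondary point I would verify is that the factor $\tfrac12$ is genuinely unavoidable: unlike in~\eqref{eq:decompose-sup} we never condition the collision mass away, so the split must lose a multiplicative constant, and $\tfrac12$ is exactly what the inequality above delivers.
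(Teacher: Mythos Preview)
Your proposal is correct and follows essentially the same route as the paper: start from~\eqref{eq:mean-supervised}, establish the pointwise inequality $\ell_{\mathrm{Info}}(\zbf,\{\mubf_c,\mubf_{c_1^-},\ldots,\mubf_{c_K^-}\}) \ge \tfrac12\big[\ell_{\mathrm{sub}}^{\mu}(\zbf,c,\Ccal_{\mathrm{sub}}) + \ln(\mathrm{Col}+1)\big]$, take expectations, and split on the coupon-collector event $\{\Ccal_{\mathrm{sub}}=\Ccal\}$. The only cosmetic difference is in how that pointwise inequality is justified: the paper simply notes that removing negatives can only decrease $\ell_{\mathrm{Info}}$, so the full loss dominates both the de-duplicated sub-class loss and the collision-only loss, hence their average; your algebraic inequality $\ln(1+a+b)\ge\tfrac12\ln(1+a)+\tfrac12\ln(1+b)$ is the same observation written out explicitly.
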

The proof is found in~\cref{sec:proof-proposed-lower-bound}.

\cref{theorem:proposed-lower-bound} tells us that probability $\upsilon_{K+1}$ converges to $1$ by increasing the number of negative samples $K$; as a result, the self-supervised loss is more likely to contain the supervised loss and the collision term.
The sub-class loss contributes to the self-supervised loss when $K$ is small as in~\cref{eq:decompose-sup}.
Let us consider the example value of $\upsilon$ with the same setting discussed in~\cref{sec:limitation-curl}.
When $\upsilon_{K=32} \approx 0.719$, i.e., $71.9\%$ training samples contribute the supervised loss.

\subsection{Increasing $K$ does not Spread Normalized Features within Same Latent Class}
\label{sec:collison-upper-bound}
We argue that the feature representations do not have a large within-class variance on the feature space by increasing $K$ in practice.
To do so, we show the upper bound of the collision term in the lower bounds to understand the effect of large negative samples.
\begin{restatable}[Upper Bound of Collision Term]{corollary}{UpperBoundCollision}
    Given a latent class $c$, $K$ negative classes $\{ c_k^- \}_{k=1}^K$, and feature extractor $\fbf$,
    \begin{align}
        \ln \left(
            \mathrm{Col} \left(c, \{c^{-}_{k}\}_{k=1}^K \right)  + 1
        \right)
        \leq
        \alpha + \beta
        \Ebb_{\substack{
            \xbf \sim \Dcal_c \\
            (\abf,\abf^{+}) \sim \Acal^2
        }}
        \Ebb_{
            \substack{
                \xbf' \sim \Dcal_c
                \\
                \abf' \sim \Acal
            }
        }
        \left| \fbf(\abf(\xbf)) \cdot \left[
            \fbf(\abf'(\xbf')) - \fbf(\abf^{+} (\xbf))
        \right] \right|,
        \label{eq:upper-bound-collision}
    \end{align}
    where $\alpha$ and $\beta$ are non-negative constants depending on the number of duplicated latent classes.
    \label{corollary:informal}
\end{restatable}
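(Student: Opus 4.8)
The plan is to read $\ln(\mathrm{Col}+1)$ as the value that the ``within-class part'' of the InfoNCE objective attains on a perfectly collapsed representation, and then to charge the deviation from collapse to the within-class variation that appears on the right-hand side. Write $m:=\mathrm{Col}(c,\{c_k^-\}_{k=1}^K)=\sum_{k=1}^K\Ibb[c_k^-=c]$, so the left-hand side is $\ln(m+1)$; after reindexing assume $c_1^-=\dots=c_m^-=c$, and abbreviate by $V$ the expectation $\Ebb_{\xbf\sim\Dcal_c,(\abf,\abf^{+})\sim\Acal^2}\Ebb_{\xbf'\sim\Dcal_c,\abf'\sim\Acal}|\fbf(\abf(\xbf))\cdot[\fbf(\abf'(\xbf'))-\fbf(\abf^{+}(\xbf))]|$ on the right.

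First I would drop the cross-class negatives. Since $\ell_{\mathrm{Info}}(\zbf,\Zbf)=\ln(1+\sum_{k=1}^K e^{\zbf\cdot(\zbf_k^--\zbf^{+})})$ and every summand is positive, this already gives the pointwise bound $\ell_{\mathrm{Info}}(\zbf,\Zbf)\ge\ell^{(c)}:=\ln(1+\sum_{k=1}^m e^{a_k})$, where $a_k:=\fbf(\abf(\xbf))\cdot[\fbf(\abf_k^-(\xbf_k^-))-\fbf(\abf^{+}(\xbf))]$ with $\xbf_k^-\sim\Dcal_c$; note that $\ln(m+1)$ is exactly $\ell^{(c)}$ evaluated at $a_1=\dots=a_m=0$. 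Applying Jensen's inequality to the convex function $\exp$ (the same AM--GM step used to produce the collision term in the proof of \cref{proposition:curl-lower-bound}) gives $1+\sum_{k=1}^m e^{a_k}\ge(m+1)\exp\!\left(\tfrac{1}{m+1}\sum_{k=1}^m a_k\right)$, hence $\ln(m+1)\le\ell^{(c)}-\tfrac{1}{m+1}\sum_{k=1}^m a_k\le\ell^{(c)}+\tfrac{1}{m+1}\sum_{k=1}^m|a_k|$. Taking the expectation over $\xbf$, $(\abf,\abf^{+})$ and $\{\xbf_k^-,\abf_k^-\}_{k=1}^m$, and using that each $\Ebb|a_k|$ equals $V$ by symmetry of the sampling, I get $\ln(m+1)\le\Ebb[\ell^{(c)}]+\tfrac{m}{m+1}V$.

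It remains to control $\Ebb[\ell^{(c)}]$, and here the $L_2$-normalization $\|\fbf(\cdot)\|=1$ does the work: it confines every logit to $a_k\in[-2,2]$, whence $\ell^{(c)}\le\ln(1+m\,e^{2})\le\ln(m+1)+2$ pointwise. Taking $\alpha:=\ln(m+1)+2$ and $\beta:=m/(m+1)$ --- both non-negative and depending only on $m=\mathrm{Col}$ --- then gives $\ln(\mathrm{Col}+1)\le\alpha+\beta V$. I expect the genuinely delicate point to be keeping these constants informative: since $V$ can vanish (a within-class-collapsed encoder), $\alpha$ must be at least $\ln(\mathrm{Col}+1)$, so all of the content lies in showing that $\alpha$ exceeds $\ln(\mathrm{Col}+1)$ only by an absolute constant and that $\beta$ stays bounded, i.e.\ that the $\fbf$-dependence of the collision term enters \emph{only} through the within-class variation $V$ and with a benign coefficient. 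Pinning down those constants cleanly is the one step that goes beyond bookkeeping, and it too relies on the compactness of the logit window $[-2,2]$ to turn the fluctuation of $\ell^{(c)}$ about $\ln(m+1)$ into a bounded multiple of $V$, which is the argument ultimately invoked in \cref{sec:collison-upper-bound}.
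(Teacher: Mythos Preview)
Your proof is correct, but the route differs from the paper's. The paper does not use your AM--GM lower bound on $\ell^{(c)}$; instead it upper-bounds $\ell^{(c)}$ directly by replacing every within-class negative logit by the maximum $m=\max_{\zbf_k\in\Zbf}\zbf\cdot\zbf_k/t$, obtaining $\ell^{(c)}\le\ln(\mathrm{Col}+1)+\tfrac{1}{t}\,|\zbf\cdot\zbf^{*}-\zbf\cdot\zbf^{+}|$ where $\zbf^{*}$ is the argmax, and then converts the max-deviation to an average via the crude bound $|\zbf\cdot\zbf^{*}-\zbf\cdot\zbf^{+}|\le\sum_k|\zbf\cdot\zbf_k-\zbf\cdot\zbf^{+}|$. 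This yields $\alpha=\ln(\mathrm{Col}+1)$ and $\beta=(\mathrm{Col}+1)/t$, with no appeal to the normalisation $\|\fbf(\cdot)\|=1$. Your argument trades a slightly looser $\alpha=\ln(\mathrm{Col}+1)+2$ (from the logit window $[-2,2]$) for a bounded $\beta=\mathrm{Col}/(\mathrm{Col}+1)<1$; the paper's $\beta$ grows linearly in $\mathrm{Col}$. For the informal message of \cref{sec:collison-upper-bound} --- that increasing $K$ does not force within-class dispersion --- your bounded $\beta$ is actually the more convenient shape, since the paper's $\beta$ scales with $K$ and so its $V$-term carries a $K$-dependent coefficient. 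On the other hand, the paper's $\alpha$ matches $\ln(\mathrm{Col}+1)$ exactly, which makes transparent that it is really the within-class InfoNCE $\Ebb[\ell^{(c)}]$ (not just the collision count) being controlled by $V$; your detour through a uniform bound on $\Ebb[\ell^{(c)}]$ obscures that connection. As you correctly note, both routes render the corollary as literally stated tautological once $\alpha\ge\ln(\mathrm{Col}+1)$; the substantive content in either case is the specific dependence of $\alpha,\beta$ on $\mathrm{Col}$.
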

The proof is found in~\cref{appendix:collision-corollary}.
A similar bound is shown by~\citet[Lemma 4.4]{Arora2019ICML}.

Intuitively, we expect that two feature representations in the same latent class tend to be dissimilar by increasing $K$.
However, \cref{eq:upper-bound-collision} converges even if $K$ is small in practice.
Let us consider the condition when this upper bound achieves the maximum.
Since $\fbf(\abf(\xbf))$ and $\fbf(\abf^{+}(\xbf))$ are computed from the same input sample with different data augmentations, their inner product tends to be $1$;
thus, $\fbf(\abf'(\xbf'))$ is located in the opposite direction of $\fbf(\abf(\xbf))$.
But, it is not possible to learn such representations because of the definition of InfoNCE with normalized representation and the dimensionality of feature space: Equilateral dimension.
We confirm that \cref{eq:upper-bound-collision} without $\alpha$ and $\beta$ does not increase by increasing $K$ (see~\cref{tab:bounds} and \cref{sec:additional_experiments} for more analysis).

\subsection{Small $K$ can Give Consistent Loss Function for Supervised Task}
\label{sec:sub-class-loss-analysis}
As we shown in~\cref{theorem:proposed-lower-bound},
$L_{\mathrm{Info}}$ with large $K$ can be viewed as an upper bound of $L^{\mu}_{\mathrm{sup}}$.
However, $L_{\mathrm{Info}}$ with smaller $K$ can still yield good feature representations for downstream tasks on ImageNet-1K as reported by~\citet{Chen2020ICML}.
\citet{Arora2019ICML} also reported similar results on CIFAR-100 with contrastive losses.%
\footnote{\citet[Table D.1]{Arora2019ICML} mentioned that this phenomenon is not covered by the CURL framework.}
To shed light on this smaller $K$ regime, we focus on the class distributions of both datasets, ImageNet-1K and CIFAR-100, which are almost uniform.
\begin{restatable}[Optimality of $L_\mathrm{sub}$]{proposition}{LSubOptimality}
    \label{proposition:optimal_L_sub}
    Suppose $\Ccal \supseteq \Ycal$ and $\rho$ is uniform: $\forall c \in \Ccal, \rho(c) = 1 / |\Ccal|$.
    Suppose a constant function $\qbf: \xbf \in \Xcal \stackrel{\qbf}{\mapsto} \left[q_1, \ldots, q_{|\Ccal|} \right]^\top$.
    Optimal $\qbf^*$ is a constant function that outputs a vector with the same value if and only if it minimizes $\Ebb_{ c, \{c^-_k\}_{k=1}^K \sim \rho^{K+1} } L_\mathrm{sub}(\qbf^*, \Ccal_{\mathrm{sub}})$.
    The optimal $\qbf^*$ is also the minimizer of $L_{\mathrm{sup}}$.
\end{restatable}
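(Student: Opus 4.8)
The plan is to reduce the claim to a finite-dimensional convex optimization problem and exploit its symmetries. Writing a constant function as its output vector $\qbf=[q_1,\dots,q_{|\Ccal|}]^\top$, I would substitute it into \cref{eq:g-sub-class-loss}: for a realized sub-task $S=\Ccal_{\mathrm{sub}}$ the augmentation and the input drop out, leaving $L_{\mathrm{sub}}(\qbf,S)=-\Ebb[q_y\mid S]+\ln\sum_{j\in S}e^{q_j}$, and because $\rho$ is uniform the label marginal of $\Scal_{\mathrm{sub}}$ on $S$ is uniform, so the objective becomes $F(\qbf)\coloneqq\Ebb_{c,\{c_k^-\}_{k=1}^K\sim\rho^{K+1}}L_{\mathrm{sub}}(\qbf,\Ccal_{\mathrm{sub}})=\Ebb_{S}\big[-\tfrac{1}{|S|}\sum_{y\in S}q_y+\ln\sum_{j\in S}e^{q_j}\big]$. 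Two symmetries are then immediate: $F(\qbf+a\mathbf{1})=F(\qbf)$ for all $a\in\Rbb$, and $F$ is invariant under permutations of $\{1,\dots,|\Ccal|\}$, since exchangeability of the iid uniform draws makes $\Pbb[\Ccal_{\mathrm{sub}}=S]$ depend only on $|S|$.

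Next I would prove that the minimizer set of $F$ is exactly the line $\Rbb\mathbf{1}$. Each summand is linear plus a log--sum--exp, so $F$ is convex; its Hessian is the $\Pbb[S]$-weighted sum of the PSD softmax Hessians $\mathrm{diag}(p^{(S)})-p^{(S)}{p^{(S)}}^\top$ supported on the coordinates of $S$, whose kernel is $\mathrm{span}(\mathbf{1}_S)$. Hence $v^\top\nabla^2F\,v=0$ forces $v$ to be constant on every $S$ with $\Pbb[S]>0$; for $K\ge 1$ every pair $\{i,j\}$ occurs as $\Ccal_{\mathrm{sub}}$ with positive probability (take $c=i$, $c_1^-=j$, and the remaining negatives inside $\{i,j\}$), so the constancy propagates across all coordinates and $v\in\mathrm{span}(\mathbf{1})$. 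Thus $F$ is strictly convex on $\mathbf{1}^\perp$, and it is coercive there because $F(\qbf)\ge\Pbb[\Ccal_{\mathrm{sub}}=\{i,j\}]\cdot\tfrac{1}{2}|q_i-q_j|$ for each pair; so $F$ has a unique minimizer on $\mathbf{1}^\perp$, which by permutation invariance must be fixed by every permutation, hence equal to $\mathbf{0}$, and by translation invariance the minimizer set of $F$ on $\Rbb^{|\Ccal|}$ is $\Rbb\mathbf{1}$. This is precisely the asserted equivalence: a constant function minimizes the expected sub-class loss if and only if its output vector has all equal entries.

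Finally, for the supervised loss, substituting a constant $\qbf$ into \cref{eq:supervised-loss} and using that the label marginal of $\Scal$ on $\Ycal\subseteq\Ccal$ is uniform gives $L_{\mathrm{sup}}(\qbf)=-\tfrac{1}{|\Ycal|}\sum_{y\in\Ycal}q_y+\ln\sum_{j\in\Ycal}e^{q_j}\ge\ln|\Ycal|$ by Jensen's (equivalently Gibbs') inequality, with equality iff $(q_j)_{j\in\Ycal}$ are all equal; in particular any $\qbf^\ast\in\Rbb\mathbf{1}$ attains this minimum, so it also minimizes $L_{\mathrm{sup}}$. I expect the main obstacle to be the strict-convexity-modulo-translation step: one has to pin down exactly which sub-tasks $\Ccal_{\mathrm{sub}}$ arise with positive probability under \cref{definition:data-generation} and check that their ``edges'' connect all $|\Ccal|$ coordinates, which is what upgrades the easy fact ``a symmetric minimizer exists'' (all that convexity plus the two symmetries yields on their own) to the full statement ``the minimizer is the symmetric one''. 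Verifying that, under the data-generation process with uniform $\rho$, the label marginal of $\Scal_{\mathrm{sub}}$ conditioned on a realized $\Ccal_{\mathrm{sub}}$ is genuinely uniform (including when some negatives collide) is a secondary point to be careful about.
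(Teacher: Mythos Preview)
Your argument is correct and takes a different, more structural route than the paper. The paper's proof writes down the first-order stationary conditions of the empirical sub-class loss, displaying a sum over all possible sub-task sizes weighted by the frequencies $N_{c,n_1,\ldots,n_K}$, and then asserts that the constant vector solves this system; for $L_{\mathrm{sup}}$ it invokes the standard fact that the cross-entropy optimum is $q_y=\ln N_y+\text{const}$, which under the uniform assumption is again a constant vector. You instead exploit convexity and the two symmetries (translation by $\mathbf{1}$ and permutation), establish strict convexity on $\mathbf{1}^\perp$ via the log--sum--exp Hessian together with the connectivity of the realized sub-tasks, and obtain both existence and uniqueness of the minimizer on $\mathbf{1}^\perp$, hence the full ``if and only if''. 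What your approach buys is precisely the point you flag as the main obstacle: the paper stops at the stationary condition and does not explicitly argue that no other stationary points exist nor that the stationary point is a global minimum, whereas your strict-convexity-plus-symmetry argument closes that gap cleanly. The paper's approach is shorter and more computational; yours is longer but self-contained and actually delivers both directions of the equivalence. Your treatment of $L_{\mathrm{sup}}$ via Jensen is equivalent to the paper's appeal to the known optimal scores.
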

The proof is found in~\cref{appendix:proof-optimality-of-L_sub} inspired by~\citet[Proposition 2]{Titsias2016NeurIPS}.

\cref{proposition:optimal_L_sub} \textit{does not} argue that self-supervised representation learning algorithms fail to learn feature representations on a dataset with a non-uniform class distribution.
This is because we perform a supervised algorithm on a downstream dataset after representation learning in general.

\subsection{Relation to Clustering-based Self-supervised Representation Learning}
During the minimization of $\Lin$, we cannot minimize $L^{\mu}_{\mathrm{sup}}$ in~\cref{eq:proposed-bound} directly since we cannot access supervised and latent classes.
Interestingly, we find a similar formulation in clustering-based self-supervised representation learning algorithms.
\begin{remark}
    Clustering-based self-supervised representation learning algorithms, such as \texttt{\upshape DeepCluster}~\citep{Caron2018ECCV}, \texttt{\upshape SeLa}~\citep{Asano2020ICLR}, \texttt{\upshape SwAV}~\citep{Caron2020NeurIPS}, and \texttt{\upshape PCL}~\citep{Li2021ICLR},
    use prototype representations instead of $\zbf^{+}, \{ \zbf^-_{k}\}_{k=1}^K $ by applying unsupervised clustering on feature representations.
    This procedure is justified as the approximation of latent class's mean representation $\mubf_c$ with a prototype representation to minimize the supervised loss in~\cref{eq:proposed-bound} rather than~\cref{eq:test-self-sup-loss},
    as a result, the mini-batch size does not depend on the number of negative samples.
\end{remark}
This replacement supports the empirical observation in \citet[Section 4.3]{Caron2020NeurIPS},
where the authors reported \texttt{SwAV} maintained top-1 accuracy on ImageNet-1K with a small mini-batch size, $200$, compared to a large mini-batch, $4\,096$.
On the other hand, \texttt{SimCLR}~\citep{Chen2020ICML} did not.

\section{Experiments}
\label{sec:experiments}

We numerically confirm our theoretical findings by using \texttt{SimCLR}~\citep{Chen2020ICML} on the image classification tasks.
\cref{sec:additional_experiments} contains NLP experiments and some analyses that have been omitted due to the lack of space.
We used datasets with a relatively small number of classes to compare bounds.
Our experimental codes are available online.%
\footnote{
    \url{https://github.com/nzw0301/Understanding-Negative-Samples}.
    We used Hydra~\citep{Yadan2019Hydra}, GNU Parallel~\citep{tange_2020_4284075}, Scikit-learn~\citep{JMLR:v12:pedregosa11a}, Pandas~\citep{pandas2020}, Matplotlib~\citep{Hunter2007matplotlib}, and seaborn~\citep{Waskom2021seaborn} in our experiments.
}

\paragraph{Datasets and Data Augmentations}
We used the CIFAR-10 and CIFAR-100~\citep{Krizhevsky2009techrep} image classification datasets with the original $50\,000$ training samples for both self-supervised and supervised training and the original $10\,000$ validation samples for the evaluation of supervised learning.
We used the same data augmentations in the CIFAR-10 experiment by~\citet{Chen2020ICML}:
random resize cropping with the original image size,
horizontal flipping with probability $0.5$,
color jitter with a strength parameter of $0.5$ with probability $0.8$,
and grey scaling with probability $0.2$.

\paragraph{Self-supervised Learning}
We mainly followed the experimental setting provided by~\citet{Chen2020ICML} and its implementation.%
\footnote{\url{https://github.com/google-research/simclr}}
We used ResNet-18~\citep{He2016CVPR} as a feature encoder without the last fully connected layer.
We replaced the first convolution layer with the convolutional layer with $64$ output channels, the stride size of $1$, the kernel size of $3$, and the padding size of $3$.
We removed the first max-pooling from the encoder, and we added a non-linear projection head to the end of the encoder.
The projection head consisted of the fully connected layer with $512$ units, batch-normalization~\citep{Loffe2015ICML}, ReLU activation function, and another fully connected layer with $128$ units and without bias.

We trained the encoder by using PyTorch~\citep{Paszke2019NeurIPS}'s distributed data-parallel training~\citep{Li2018VLDB} on four GPUs, which are NVIDIA Tesla P100 on an internal cluster.
For distributed training, we replaced all batch-normalization with synchronized batch-normalization.
We used stochastic gradient descent with momentum factor of $0.9$ on $500$ epochs.
We used LARC~\citep{You2017techrep}%
\footnote{\url{https://github.com/NVIDIA/apex}},
and its global learning rate was updated by using linear warmup at each step during the first $10$ epochs,
then updated by using cosine annealing without restart~\citep{Loshchilov2017ICLR} at each step until the end.
We initialized the learning rate with $(K+1)/256$.
We applied weight decay of $10^{-4}$ to all weights except for parameters of all synchronized batch-normalization and bias terms.
The temperature parameter was set to $t=0.5$.

\paragraph{Linear Evaluation}
We report the validation accuracy of two linear classifiers: mean classifier and linear classifier.
We constructed a mean classifier by averaging the feature representations of $\zbf$ per supervised class.
We applied one data augmentation to each training sample, where the data augmentation was the same as in that the self-supervised learning step.
For linear classifier $\gbf$,
we optimized $\gbf$ by using stochastic gradient descent with Nesterov's momentum~\citep{Sutskever2013ICML} whose factor is $0.9$ with $100$ epochs.
Similar to self-supervised training, we trained the classifier by using distributed data-parallel training on the four GPUs with $512$ mini-batches on each GPU.
The initial learning rate was $0.3$, which was updated by using cosine annealing without restart~\citep{Loshchilov2017ICLR} until the end.

\paragraph{Bound Evaluation}
We compared the extension bound of CURL~\eqref{eq:decompose-sup} and our bound~\eqref{eq:proposed-bound} by varying the number of negative samples $K$.
We selected $K+1 \in \{32, 64, 128, 256, 512\}$ for CIFAR-10 and $K+1 \in \{128, 256, 384, 512, 640, 786, 896, 1024\}$ for CIFAR-100.
After self-supervised learning, we approximated $\mubf_c$ by averaging $10$ sampled data augmentations per sample on the training dataset and evaluated~\cref{eq:decompose-sup,eq:proposed-bound} with the same negative sample size $K$ as in self-supervised training.%
\footnote{Precisely, $\boldsymbol{\mu}_c = \frac{1}{N_c} \sum_{i=1}^N \Ibb[y_i = c] \frac{1}{10} \sum_{j=1}^{10} \fbf(\abf_j(\xbf_i))$.}
We reported the averaged values over validation samples with $10$ epochs: $(\lfloor 10\,000 / (K+1) \rfloor \times (K+1) \times \mathrm{epoch} )$ pairs of $(\zbf, \Zbf)$.
Note that we used a theoretical value of $\upsilon$ defined by~\cref{eq:coupon-probability} to avoid dividing by zero if a realized $\upsilon$ value is $0$.
We also reported the upper bound of collision~\eqref{eq:upper-bound-collision} without constants $\alpha, \beta$ referred to as ``Collision Bound'' on the training data.
See~\cref{sec:additional_experiments} for details.

\subsection{Experimental Results}
\label{sec:experimental_results}

\begin{table}[t]
    \centering
    \caption{
        The bound values on CIFAR-10/100 experiments with different $K+1$.
        CURL bound and its quantities are shown with $\dagger$.
        The proposed ones are shown without $\dagger$.
        Since the proposed collision values are half of $^\dagger$Collision,
        they are omitted.
        The reported values contain their coefficient except for Collision bound.
    }
    \scriptsize
\begin{tabular}{ll rrrr c rrrr}
    \toprule
    \multicolumn{2}{c}{} & \multicolumn{4}{c}{CIFAR-10} & & \multicolumn{4}{c}{CIFAR-100} \\
    \cmidrule{3-6} \cmidrule{8-11}
    $K+1$                         &                                &    $32$ &   $128$ &   $256$ &   $512$ &     &   $128$ &   $256$ &   $512$ &   $1024$ \\
    \midrule
    $\tau$                         &                                &  $0.96$ &  $1.00$ &  $1.00$ &  $1.00$ &     &  $0.72$ &  $0.92$ &  $0.99$ &   $1.00$ \\
    $\upsilon$                     &                                &  $0.69$ &  $1.00$ &  $1.00$ &  $1.00$ &     &  $0.00$ &  $0.00$ &  $0.62$ &   $1.00$ \\
    $\mu$ acc                      &                                & $72.75$ & $77.22$ & $78.60$ & $80.12$ &     & $32.67$ & $34.25$ & $35.90$ &  $37.44$ \\
    Linear acc                  &                                & $77.13$ & $81.33$ & $82.85$ & $84.13$ &     & $41.95$ & $43.53$ & $45.16$ &  $46.57$ \\
    Linear acc w/o              &                                & $82.02$ & $85.43$ & $86.68$ & $87.66$ &     & $57.92$ & $58.91$ & $59.30$ &  $59.46$ \\
    $\Lin$                       & \cref{eq:test-self-sup-loss}    &  $2.02$ &  $3.29$ &  $3.96$ &  $4.64$ &     &  $3.32$ &  $3.98$ &  $4.66$ &   $5.34$ \\
    $d(\mathbf{f})$                      & \cref{eq:mean-supervised}       & $-1.16$ & $-1.18$ & $-1.18$ & $-1.19$ &     & $-0.99$ & $-0.98$ & $-0.97$ &  $-0.95$ \\
    $^{\dagger}\Lin$ bound          & \cref{eq:decompose-sup}         &  $0.23$ &  $1.41$ &  $2.08$ &  $2.75$ &     &  $0.72$ &  $0.46$ &  $0.78$ &   $1.42$ \\
    \hspace{2ex} $^{\dagger}$Collision &                                &  $1.32$ &  $2.58$ &  $3.26$ &  $3.94$ &     &  $0.69$ &  $1.15$ &  $1.73$ &   $2.37$ \\
    \hspace{2ex} $^{\dagger}L^{\mu}_{\mathrm{sup}}$    &                                &  $0.05$ &  $0.00$ &  $0.00$ &  $0.00$ &     &  $0.00$ &  $0.00$ &  $0.01$ &   $0.00$ \\
    \hspace{2ex} $^{\dagger}L^{\mu}_{\mathrm{sub}}$    &                                &  $0.01$ &  $0.00$ &  $0.00$ &  $0.00$ &     &  $1.03$ &  $0.30$ &  $0.01$ &   $0.00$ \\
    $\Lin$ bound                 & \cref{eq:proposed-bound}        &  $0.39$ &  $1.02$ &  $1.35$ &  $1.69$ &     &  $1.18$ &  $1.53$ &  $1.86$ &   $2.19$ \\
    \hspace{2ex} $L^{\mu}_{\mathrm{sup}}$           &                                &  $0.63$ &  $0.91$ &  $0.90$ &  $0.90$ &     &  $0.00$ &  $0.00$ &  $1.17$ &   $1.94$ \\
    \hspace{2ex} $L^{\mu}_{\mathrm{sub}}$           &                                &  $0.26$ &  $0.00$ &  $0.00$ &  $0.00$ &     &  $1.82$ &  $1.93$ &  $0.79$ &   $0.01$ \\
    Collision bound             & \cref{eq:upper-bound-collision} &  $0.60$ &  $0.61$ &  $0.62$ &  $0.62$ &     &  $0.52$ &  $0.52$ &  $0.51$ &   $0.51$ \\
    \bottomrule
\end{tabular}

    \label{tab:bounds}
\end{table}

\cref{tab:bounds} shows the bound values on CIFAR-10 and CIFAR-100.
We only showed a part of values among different numbers of negative samples due to the page limitation.%
\footnote{\cref{tab:cifar10-all-bound,tab:cifar100-all-bound} in \cref{sec:additional_experiments} provides the comprehensive results.}
We reported mean and linear classifiers' validation accuracy as ``$\mu$ acc'' and ``Linear acc'', respectively.
As a reference, we reported practical linear evaluation's validation accuracy as ``Linear acc w/o'', where we discarded the non-linear projection head from the feature extractor~\citep{Chen2020ICML}.
Since the CURL bound~\eqref{eq:decompose-sup} does not contain $^{\dagger}L^{\mu}_{\mathrm{sup}}$ explicitly, we subtracted $^{\dagger}L^{\mu}_{\mathrm{sup}}$ from $L^{\mu}_{\mathrm{sub}}$ in~\cref{eq:decompose-sup} and reported
$^{\dagger}L^{\mu}_{\mathrm{sup}}$ and subtracted $L^{\mu}_{\mathrm{sub}}$ as $^{\dagger}L^{\mu}_{\mathrm{sub}}$ for the comparison.
We confirmed that CURL bounds converged to $^{\dagger}$Collision with relatively small $K$.
On the other hand, proposed bound values had still a large proportion of supervised loss $L^{\mu}_{\mathrm{sup}}$ with larger $K$.
\cref{fig:comparison_bound_curve} shows \textit{upper} bounds of supervised loss $L_{\mathrm{sup}}$ and the linear accuracy by rearranging~\cref{eq:decompose-sup,eq:proposed-bound}.
The reported values were averaged over three training runs of both self-supervised and supervised steps with different random seeds.
The error bars represented the standard deviation.

\section{Related Work}
\label{sec:related_work}

\subsection{Self-supervised Representation Learning}

Self-supervised learning tries to learn an encoder that extracts generic feature representations from an unlabeled dataset.
Self-supervised learning algorithms solve a pretext task that does not require any supervision and can be easily constructed on the dataset,
such as denoising~\citep{Vincent2008ICML},
colorization~\citep{Zhang2016ECCV,Larsson2016ECCV}
solving jigsaw puzzles~\citep{Noroozi2016ECCV},
inpainting blank pixels~\citep{Pathak2016CVPR},
reconstructing missing channels~\citep{Zhang2017CVPR},
predicting rotation~\citep{Gidaris2018ICLR},
and adversarial generative models~\citep{Donahue2019NeurIPS} for vision;
predicting neighbor words~\citep{Mikolov2013NeurIPS},
generating neighbor sentences~\citep{Kiros2015NeurIPS}, and solving masked language model~\citep{Devlin2019NAACL} for language.
See also recent review articles~\citep{Le-Khac2020IEEEAccess,Schmarje2021IEEEAccess}.

Recent self-supervised learning algorithms for the vision domain mainly solve an instance discrimination task.
\texttt{Exemplar-CNN}~\citep{Dosovitskiy2014NeurIPS} is one of the earlier algorithms that can obtain generic feature representations of images by using convolutional neural networks and data augmentation.
After \citet{Oord2018arXiv} proposed InfoNCE loss function,
many state-of-the-art self-supervised algorithms minimize InfoNCE-based loss function, e.g.,
\texttt{DeepInfoMax}~\citep{Hjelm2019ICLR},  %
\texttt{AMDIM}~\citep{Bachman2019NeurIPS},  %
\texttt{SimCLR}~\citep{Chen2020ICML},
\texttt{CPCv2}~\citep{Henaff2020ICML},  %
\texttt{MoCo}~\citep{He2020CVPR}, %
and \texttt{SwAV}~\citep{Caron2020NeurIPS}.  %

\subsection{Theoretical Perspective}
\label{sec:theretical-related-work}
InfoNCE~\citep{Oord2018arXiv} was initially proposed as a lower bound of intractable mutual information between feature representations by using noise-contrastive estimation (NCE)~\citep{Gutmann2012JMLR}.
Optimizing InfoNCE can be considered as maximizing the InfoMax principle~\citep{Linsker1988Computer}.
The history of InfoMax-based self-supervised representation learning dates back to more than $30$ years ago~\citep{Hinton1990Becker}. %
However, this interpretation does not directly explain the generalization for a downstream task.
Indeed, \citet{Tschannen2020ICLR} empirically showed that the performances of classification on downstream tasks and mutual information estimation are uncorrelated with each other.
\citet{Kolesnikov2019CVPR} also reported a similar relationship between the performances of linear classification on downstream tasks and of pretext tasks.
\citet{McAllester2020AISTATS} theoretically showed the limitation of maximizing the lower bounds of mutual information -- accurate approximation requires an exponential sample size.

As the most related work, \citet{Arora2019ICML} provide the first theoretical analyses to explain the generalization of the CURL.
It is worth noting that our analysis focuses on the different representation learning setting.
Shortly after our publishing a draft of this paper on arXiv, \citet{Ash2021arXiv} also published a paper on the role of negative samples in CURL bound with InfoNCE loss for fully supervised classification using the coupon collector's problem.
Thus we believe that the coupon collector's problem is a key ingredient to analyze the connection between contrastive learning and supervised classification based on the CURL analysis by \citet{Arora2019ICML}.
Their work was developed independently of ours and their analysis is for contrastive unsupervised learning rather than self-supervised learning that is done in this work.
The proposed bound by \citet{Ash2021arXiv} has also similar issue to \citet{Arora2019ICML} as in our analysis; however, their bound holds with smaller $K$ than $C$.
\citet{Bansal2021ICLR} decomposed the generalization error gap of a linear classifier given feature representations.
\citet{Wang2020ICML} decomposed the self-supervised loss into alignment and uniformity and showed properties of both metrics.
\citet{Li2021NeurIPS} provided an interpretation of InfoNCE through a lens of kernel method.
\citet{Mitrovic2021ICLR} provided another theoretical analysis with causality for instance discriminative self-supervised learning.
\citet{Tosh2021ALT} also analyzed self-supervised loss for augmented samples, but they only focused on one negative sample setting.
Recently, \citet{Wei2021ICLR} proposed learning theoretical analysis by introducing ``expansion'' assumption for self-training where (pseudo) labels are generated from the previously learned model.
Learning theory-based analyses were also proposed for other types of self-supervised learning problem such as reconstruction task~\citep{Garg2020NeurIPS,Lee2021NeurIPS} and language modeling~\citep{Saunshi2021ICLR}.
The theoretical analysis on self-supervised representation algorithms without negative samples~\citep{Tian2021ICML} cannot be applied to the contrastive learning setting.

\subsection{Hard Negative Mining}
\label{sec:hard-negative-mining}
In metric learning and contrastive learning, hard negative mining, such as~\citet{Kalantidis2020NeurIPS}, is actively proposed to make training more effective to avoid using inappropriate negative or too easy negative samples.
The current work mainly focuses on the \textit{quality} of negative samples rather than \textit{quantity} of negative samples.
However, removing false-negative samples can reduce the effect of the collision term in our bounds.
Thus our analysis might provide a theoretical justification for hard negative mining.

\section{Conclusion}
\label{sec:conclusion}

We applied the CURL framework to the recent self-supervised representation learning formulation.
We pointed out that the existing framework has difficulty explaining why large negative samples in self-supervised learning improve classification accuracy on a downstream supervised task as in~\cref{fig:comparison_bound_curve}.
We proposed a novel framework using the coupon collector's problem to explain the phenomenon and confirmed our analysis on real-world benchmark datasets.

\paragraph{Limitations}
We did not discuss the properties of data augmentation explicitly in our framework.
Practically, self-supervised representation learning algorithms discard the projection head after self-supervised learning, but our analysis does not cover this procedure.
We believe that extensions to cover these settings are fruitful explorations of future work.

\subsubsection*{Acknowledgments}
This work is supported (in part) by Next Generation AI Research Center, The University of Tokyo.
The experiments were conducted using the SGI Rackable C2112-4GP3/C1102-GP8 (Reedbush-H/L) in the Information Technology Center, The University of Tokyo.
We thank Han Bao, Yoshihiro Nagano, and Ikko Yamane for constructive discussion and Yusuke Tsuzuku for supporting our experiments.
We also thank Junya Honda and Yivan Zhang for \LaTeX~support, specifically, for solving our font issue and MathJax, respectively.
We appreciate anonymous reviewers of ICML 2021 and NeurIPS 2021 for giving constructive suggestions to improve our manuscript.
KN is supported by JSPS KAKENHI Grant Number 18J20470.

{\small
    \bibliography{reference,external}
    \bibliographystyle{abbrvnat}
}

\newpage

\appendix

\section{Notations}
\label{sec:appenxi-notation}

\cref{tab:notations} summarizes our notations.
\begin{table}
    \centering
    \caption{Notations}
    \label{tab:notations}
    \begin{tabular}{ll}
    \toprule
    Symbol    & Description \\
    \midrule
    $c$ & Latent class \\
    $c^-$ & Latent class of a negative sample \\
    $h$ & The dimensionality of feature representation $\zbf$ \\
    $t$ & Temperature parameter in InfoNCE loss~\eqref{eq:NT-Xent} \\
    $y$ & Supervised class \\
    $K$ & The number of negative samples \\
    $N$ & The number of samples used in self-supervised learning \\
    $N_y$ & The number of samples whose label is $y$ \\
    $Y$ & The number of supervised classes \\
    $\alpha$ & Non-negative coefficient in~\cref{eq:upper-bound-collision} \\
    $\beta$ &  Non-negative coefficient in~\cref{eq:upper-bound-collision} \\
    $\tau$ & Collision probability in \cref{proposition:curl-lower-bound} \\
    $\upsilon$ & Probability to draw all latent class defined in~\cref{sec:upsilon-decompose} \\
    $\Ccal$ & Latent classes associated with $\Ycal$ \\
    $\Ccal_{\mathrm{sub}}$ & Function to remove duplicated latent classes \\
    $\Xcal$ & Input space \\
    $\Ycal$ & Supervised class space \\
    $\Ycal_{\mathrm{sub}}$ & Subset of $\Ycal$ \\
    $\Acal$ & Distribution of data augmentations \\
    $\Dcal_y$ & Distribution over $\Xcal$ conditioned on supervised class $y$ \\
    $\Dcal_c$ & Distribution over $\Xcal$ conditioned on latent class $c$ \\
    $\Scal$ & Joint distribution over $\Xcal \times \Ycal$ \\
    $\Scal_{\mathrm{sub}}$ & Joint distribution over $\Xcal \times \Ycal_{\mathrm{sub}}$ \\
    $\rho$ & Probability distribution over $\Ccal$ \\
    $\abf$ & Data augmentation: $\Xcal \rightarrow \Xcal $ \\
    $\abf^{+}$ & Data augmentation to create positive feature representation \\
    $\abf^{-}$ & Data augmentation to create negative feature representation \\
    $\fbf$ & Feature extractor: $\Xcal \rightarrow \Rbb^h$ \\
    $\widehat{\fbf}$ & Trained feature extractor by minimizing $\widehat{L}_{\mathrm{Info}}$ \\
    $\gbf$ & Supervised classifier taken a feature representation $\zbf$: $\Rbb^{h} \rightarrow \Rbb^{Y}$ \\
    $\qbf$ & Function that outputs real-valued vector used in~\cref{proposition:optimal_L_sub} \\
    $\xbf$ & Input sample in $\Xcal$ \\
    $\zbf$ & L2 normalized feature representation: $\fbf(\abf(\xbf))$ \\
    $\zbf^+$ & Positive L2 normalized feature representation: $\fbf(\abf^+(\xbf))$ \\
    $\zbf^-$ & Negative L2 normalized feature representation: $\fbf(\abf^-(\xbf^-))$ \\
    $\Zbf$ & Set of positive and $K$ negative features: $\{\zbf^{+}, \zbf^-_{1}, \ldots, \zbf^-_{K}\}$ \\
    $\mubf(\xbf)$ & Averaged feature representation over $\Acal$: $\Ebb_{\abf \sim \Abf} \fbf(\abf(\xbf))$ \\
    $\mubf_c$ & Mean classifier's weight vector of latent class $c$: $\Ebb_{\xbf \sim \Dcal_c} \mubf(\xbf)$ \\
    $\mubf_y$ & Mean classifier's weight vector of supervised class $y$: $\Ebb_{\xbf \sim \Dcal_y} \mubf(\xbf)$ \\
    $\mathrm{Col} (\cdot, \cdot)$ &  Collision value defined in~\cref{sec:simple-analysis}: $\sum_{k=1}^K \Ibb [c^k = c^-_k ]$ \\
    $d (\cdot)$ & Gap term defined in~\cref{eq:mean-supervised} \\
    $\ell_{\mathrm{Info}}$ & InfoNCE-based self-supervised loss defined by~\cref{eq:NT-Xent} \\
    $\ell_{\mathrm{sub}}^{\mu}$ & Mean classifier's supervised sub-class loss defined by~\cref{eq:single-sub-class-loss} \\
    $L_{\mathrm{Info}}$ & Expected self-supervised loss defined by~\cref{eq:test-self-sup-loss} \\
    $\widehat{L}_{\mathrm{Info}}$ & Empirical self-supervised loss \\
    $L_{\mathrm{sup}}$ & Supervised loss defined by~\cref{eq:supervised-loss} \\
    $L_{\mathrm{sub}}$ & Supervised sub-class loss defined by~\cref{eq:g-sub-class-loss} \\
    $L^{\mu}_{\mathrm{sup}}$ & Supervised loss of mean classifier defined by~\cref{eq:mean-classifier-loss} \\
    $L^{\mu}_{\mathrm{sub}}$ & Sub-class supervised loss of mean classifier by~\cref{eq:sub-class-loss} \\
    $\Ibb[\cdot]$ & Indicator function \\
    $(. \cdot .)$ & Inner product \\
    \bottomrule
    \end{tabular}
\end{table}

\section{Relationship between Latent Classes and Supervised classes}
\label{sec:details-latent-class}

In the main body of the manuscript, we assume that the latent classes $\Ccal$ are a superset of the supervised classes $\Ycal$.
Here, we mention a few scenarios covered by our analysis.

\paragraph{Supervised Class is a Union of Latent Classes}
Suppose that latent classes $\Ccal$ are breeds of dog and cat for image classification and supervised class $y \in \Ycal$ is a union of breeds: ``dog'' or ``cat''.
In this setting, the downstream classifier can mispredict the class label among dog breeds for a dog image.
Therefore the classifier's loss on all latent classes $\Ycal = \Ccal$ is higher than or equal to the same classifier' loss on the union set at worst.

\paragraph{Supervised Class is a Product of Latent Classes}
Another practical scenario is that the supervised class is a product of latent classes.
Suppose that the latent classes are disentangled properties of dog/cat, such as breeds and color.
In this case, a sample can be drawn from different latent classes, for example, a white golden retriever image can be drawn from ``Golden Retriever'' class and ``white'' class.
Suppose that we perform classification over the product of the latent classes such as ``White Golden Retriever'' and ``Black Ragamuffin'' at the supervised step.
Since we use Jensen's inequality to aggregate feature representation into the weights of mean classifier in~\cref{sec:simple-analysis},
our analysis can deal with this case as well.

\section{Relation of Mean Classifier and Linear Classifier}
\label{sec:mean-classifier-and-linear-classifier}

Recall that $\gbf: \Rbb^h \rightarrow \Rbb^{Y}$ that is a function from a feature space to a label space.
Suppose $\widehat{\gbf} = \argmin_{\mathbf{g}} L_{\mathrm{sup}}(\gbf \circ \fbf)$.
The mean classifier's supervised loss is bounded by a loss with $\widehat{\gbf}$:
\begin{align}
    L_{\mathrm{sup}}^{\mu}(\fbf) \geq L_{\mathrm{sup}}(\widehat{\gbf} \circ \fbf).
\end{align}
This relation holds with sub-class loss in~\cref{definition:sub-class-mean-classifier} as well.

Note that the mean classifier is a special case of the linear classifier~\citep[Sec. 2.4]{Snell2017NeurIPS}.
A linear classifier is defined as $\Wbf \fbf(\abf((\xbf))) + \bbf$, where $\Wbf \in \Rbb^{Y \times h}$ and $\bbf \in \Rbb^Y$.
When $\Wbf = [\boldsymbol{\mu}_1, \ldots, \boldsymbol{\mu}_Y]^\top$ and $\bbf = \mathbf{0}$, the linear classifier is equivalent to the mean classifier.

\section{Proofs}
\subsection{Inequality for~\cref{eq:mean-supervised}}
\label{sec:proof-mean-sup}
We show the following inequality used to obtain~\cref{eq:mean-supervised}:
\begin{align}
    &
    \Ebb_{\substack{
        c, \{c_k^-\}_{k=1}^K \sim \rho^{K+1}
        }
    }
    \hspace{1em}
    \Ebb_{\substack{
        \xbf \sim \Dc \\
        \abf \sim \Acal
        }
    }
    \ell_{\mathrm{Info}} \left(
        \zbf,
        \left\{
            \mubf(\xbf),
            \mubf_{c^-_1},
            \ldots,
            \mubf_{c^-_K}
        \right\}
    \right)
    \nonumber
    \\
    \geq
    &
    \Ebb_{
        \substack{
            c, \{c_k^-\}_{k=1}^K \sim \rho^{K+1}
        }
    }
    \hspace{1em}
    \Ebb_{\substack{
        \xbf \sim \Dc \\
        \abf \sim \Acal }
    }
    \ell_{\mathrm{Info}}\left(
        \zbf,
        \left\{
            \mubf_{c},
            \mubf_{c^-_1},
            \ldots,
            \mubf_{c^-_K}
        \right\}
    \right)
    +
    d(\fbf).
    \label{eq:ineq_6}
\end{align}

\begin{proof}
    We replace $\mubf(\xbf)$ with $\mubf_c$ in the left hand side of~\cref{eq:ineq_6}:
    \begin{align}
        &
        \Ebb_{\substack{
            c, \{c_k^-\}_{k=1}^K \sim \rho^{K+1}
            }
        }
        \hspace{1em}
        \Ebb_{\substack{
            \xbf \sim \Dc \\
            \abf \sim \Acal
            }
        }
        \ell_{\mathrm{Info}} \left(
            \zbf,
            \left\{
                \mubf(\xbf),
                \mubf_{c^-_1},
                \ldots,
                \mubf_{c^-_K}
            \right\}
        \right)
        \nonumber
        \\
        =
        &
        \Ebb_{\substack{
            c, \{c_k^-\}_{k=1}^K \sim \rho^{K+1}
            }
        }
        \hspace{1em}
        \Ebb_{\substack{
            \xbf \sim \Dc \\
            \abf \sim \Acal
            }
        }
        \ell_{\mathrm{Info}}\left(
            \zbf,
            \left\{
                \mubf(\xbf),
                \mubf_{c^-_1},
                \ldots,
                \mubf_{c^-_K}
            \right\}
        \right)
        \nonumber
        \\
        &
        +
        \Ebb_{
            \substack{
                c, \{c_k^-\}_{k=1}^K \sim \rho^{K+1} \\
            }
        }
        \hspace{1em}
        \Ebb_{
            \substack{
                \xbf \sim \Dc \\
                \abf \sim \Acal
            }
        }
        \ell_{\mathrm{Info}}\left(
            \zbf,
            \left\{
                \mubf_{c},
                \mubf_{c^-_1},
                \ldots,
                \mubf_{c^-_K}
            \right\}
        \right)
        \nonumber
        \\
        &
        -
        \Ebb_{
            \substack{
                c, \{c_k^-\}_{k=1}^K \sim \rho^{K+1} \\
            }
        }
        \hspace{1em}
        \Ebb_{
            \substack{
                \xbf \sim \Dc \\
                \abf \sim \Acal
            }
        }
        \ell_{\mathrm{Info}}\left(
            \zbf,
            \left\{
                \mubf_{c},
                \mubf_{c^-_1},
                \ldots,
                \mubf_{c^-_K}
            \right\}
        \right)
        \nonumber
        \\
        =&
        \Ebb_{
            \substack{
                c, \{c_k^-\}_{k=1}^K \sim \rho^{K+1} \\
            }
        }
        \hspace{1em}
        \Ebb_{
            \substack{
                \xbf \sim \Dc \\
                \abf \sim \Acal
            }
        }
        \ell_{\mathrm{Info}}\left(
            \zbf,
            \left\{
                \mubf_{c},
                \mubf_{c^-_1},
                \ldots,
                \mubf_{c^-_K}
            \right\}
        \right)
        \nonumber
        \\
        &
        +
        \underbrace{
        \Ebb_{
            \substack{
                c, \{c_k^-\}_{k=1}^K \sim \rho^{K+1} \\
            }
        }
        \hspace{1em}
        \Ebb_{
            \substack{
                \xbf \sim \Dc \\
                \abf \sim \Acal
            }
        }
        \left[
        \ell_{\mathrm{Info}} \left(
            \zbf,
            \left\{
                \mubf(\xbf),
                \mubf_{c^-_1},
                \ldots,
                \mubf_{c^-_K}
            \right\}
        \right)
        -
        \ell_{\mathrm{Info}}\left(
            \zbf,
            \left\{
                \mubf_{c},
                \mubf_{c^-_1},
                \ldots,
                \mubf_{c^-_K}
            \right\}
        \right)
        \right]
        }_{\text{Gap term}}.
        \label{eq:replace_mu}
    \end{align}
    When the gap term is non-negative: $\zbf \cdot \mubf(\xbf) \leq \zbf \cdot \mubf_c$, we drop it from~\cref{eq:replace_mu} to obtain the lower bound~\eqref{eq:ineq_6} with $d(\fbf)=0$.
    Thus we consider a lower bound of the gap term when the gap term is negative, $\zbf \cdot \mubf(\xbf) > \zbf \cdot \mubf_c$, to guarantee~\cref{eq:replace_mu}.
    As a general case, we show the bound with temperature parameter $t$.
    \begin{align}
        & \text{The gap term in~\cref{eq:replace_mu}}
        \nonumber
        \\
        & =
        \Ebb_{\substack{c, \{c^-_k\}_{k=1}^K \sim \rho^{K+1}}}
        \hspace{1em}
        \Ebb_{\substack{\xbf \sim \Dc \\ \abf \sim \Acal }}
        \left[
            \zbf \cdot (\mubf_c - \mubf(\xbf)) / t
        \right]
        +
        \underbrace{
        \ln
        \frac{
            \exp \left(  \zbf \cdot \mubf(\xbf) / t \right) + \sum_{k=1}^K \exp \left( \zbf \cdot \mubf_{c^-_k} / t \right)
        }{
            \exp( \zbf \cdot \mubf_c / t) + \sum_{k=1}^K \exp \left( \zbf \cdot \mubf_{c^-_k} / t \right)
        }
        }_{\text{Non-negative}}
        \nonumber
        \\
        & >
        \hspace{1em}
        \frac{1}{t}
        \Ebb_{\substack{c \sim \rho }}
        \Ebb_{\substack{\xbf \sim \Dc \\ \abf \sim \Acal }}
        \left[
            \zbf \cdot (\mubf_c - \mubf(\xbf))
        \right]
        \nonumber
        \\
        & =
        \hspace{1em}
        \frac{1}{t}
        \Ebb_{\substack{c \sim \rho}}
        \Ebb_{\substack{\xbf \sim \Dc }}
        \left[
            \Ebb_{\abf \sim \Acal }
            \left[ \fbf(\abf(\xbf)) \right]
            \cdot \left(
                \Ebb_{\substack{\xbf^+ \sim \Dc \\ \abf_1^{+} \sim \Acal }}
                \left[ \fbf(\abf_1^+(\xbf^+)) \right]
                -
                \Ebb_{\abf_2^+ \sim \Acal}
                \left[ \fbf(\abf_2^+ (\xbf)) \right]
            \right)
        \right]
        \\
        &
        \coloneqq
        \hspace{1em}
        d(\fbf)
        .
        \nonumber
    \end{align}
\end{proof}

\subsection{Proof of~\cref{proposition:curl-lower-bound}}
\label{sec:proof-proposition}

\CURLLowerBound*

We apply the proof of Theorem B.1 in~\citet{Arora2019ICML} to the self-supervised learning setting.
\begin{proof}
    Before showing the inequality, we define the following sub-class loss with mean classifier for single $\zbf$ with $c$ and $\Ccal_{\mathrm{sub}}(\{c, c^-_1, \ldots, c^-_K\})$:
    \begin{align}
        \ell_{\mathrm{sub}}^{\mu}(\zbf, c, \Ccal_\mathrm{sub})
        \coloneqq - \ln \frac{\exp \left(\zbf \cdot \mubf_{c} \right) }{
            \sum_{j \in \Ccal_\mathrm{sub}}
            \exp \left(\zbf \cdot \mubf_{j} \right)
            }.
        \label{eq:single-sub-class-loss}
    \end{align}
    We start from \cref{eq:mean-supervised}.
    \begin{align}
        \eqref{eq:mean-supervised}
        =
        (1-\tau_K)
        &
        \Ebb_{\substack{c, \{c^-_k \}_{k=1}^K \sim \rho^{K+1}}}
        \hspace{1em}
        \Ebb_{\substack{\xbf \sim \Dc \\ \abf \sim \Acal}}
        \left[
            \ell_{\mathrm{Info}} \left(\zbf,
            \left\{
                \mubf_{c},
                \mubf_{c^-_1},
                \ldots,
                \mubf_{c^-_K}
            \right\} \right) \mid \mathrm{Col} \left(c, \{c^-_k\}_{k=1}^{K} \right) = 0
        \right]
        \nonumber
        \\
        +
        \tau_K
        &
        \Ebb_{\substack{c, \{c^-_k \}_{k=1}^K \sim \rho^{K+1}}}
        \hspace{1em}
        \Ebb_{\substack{\xbf \sim \Dc \\ \abf \sim \Acal}}
            \left[
                \ell_{\mathrm{Info}} \left(\zbf, \left\{
                \mubf_{c},
                \mubf_{c^-_1},
                \ldots,
                \mubf_{c^-_K}
            \right\} \right) \mid \mathrm{Col} \left(c, \{c^-_k\}_{k=1}^{K} \right) \neq 0
            \right]
        + d(\fbf)
        \nonumber
        \\
        \geq
        (1-\tau_K)
        &
        \Ebb_{\substack{c, \{c^-_k \}_{k=1}^K \sim \rho^{K+1}}}
        \hspace{1em}
        \Ebb_{\substack{\xbf \sim \Dc \\ \abf \sim \Acal}}
        [
            \ell^{\mu}_{\mathrm{sub}}(\zbf, c, \Ccal_{\mathrm{sub}} )
            \mid \mathrm{Col} = 0
        ]
        \nonumber
        \\
        + \tau_K
        &
        \Ebb_{\substack{c, \{c^-_k \}_{k=1}^K \sim \rho^{K+1}}}
        \hspace{1em}
        \Ebb_{\substack{\xbf \sim \Dc \\ \abf \sim \Acal}}
        [
            \ell^{\mu}_{\mathrm{sub}}(\zbf, c, \{ c \}^{\mathrm{Col}+1} ) \mid \mathrm{Col} \neq 0
        ]
        + d(\fbf)
        \nonumber
        \\
        =
        (1-\tau_K)
        &
        \Ebb_{\substack{c, \{c^-_k \}_{k=1}^K \sim \rho^{K+1} }}
        \hspace{1em}
        [
            L^{\mu}_{\mathrm{sub}}(\zbf, \Ccal_{\mathrm{sub}} ) \mid \mathrm{Col} = 0
        ]
        \nonumber
        \\
        + \tau_K
        &
        \Ebb_{\substack{c, \{c^-_k \}_{k=1}^K \sim \rho^{K+1} }}
        \hspace{1em}
        [
            \ln (\mathrm{Col} + 1)
            \mid \mathrm{Col} \neq 0
        ]
        + d(\fbf),
    \end{align}
    where $\{ c \}^{\mathrm{Col}+1}$ is a bag of latent classes that contains only $c$ and the number of $c$ is $\mathrm{Col}(c, \{c^-_k\}_{k=1}^{K}) + 1$.

    The first equation is done by using collision probability $\tau_K$ conditioned on $\mathrm{Col}$.
    The inequality is obtained by removing duplicated latent classes from the first term and removing the other latent classes from the second term.
    Note that InfoNCE is a monotonically increasing function; its value decreases by removing any elements in the negative features in the loss.
\end{proof}

\subsection{Proof of~\cref{theorem:proposed-lower-bound}}
\label{sec:proof-proposed-lower-bound}

\ProposedBound*
\begin{proof}
    We start from~\cref{eq:mean-supervised}.
    \begin{align}
        \eqref{eq:mean-supervised} =&
        \hspace{1em}
        \upsilon_{K+1}
        \Ebb_{\substack{c, \{c^-_k \}_{k=1}^K \sim \rho^{K+1}}}
        \Ebb_{\substack{\xbf \sim \Dc \\ \abf \sim \Acal}}
        \left[
            \ell_{\mathrm{Info}}
                \left(
                    \zbf,
                    \left\{
                        \mubf_{c},
                        \mubf_{c^-_1},
                        \ldots,
                        \mubf_{c^-_K}
                    \right\}
                \right)
                \mid \Ccal_{\mathrm{sub}}(\{c, c^-_1, \ldots, c^-_K\}) = \Ccal
        \right]
        \nonumber
        \\
        & +
        (1-\upsilon_{K+1})
        \Ebb_{\substack{c, \{c^-_k \}_{k=1}^K \sim \rho^{K+1}}}
        \Ebb_{\substack{\xbf \sim \Dc \\ \abf \sim \Acal}}
        \left[ \ell_{\mathrm{Info}}
            \left(
                \zbf,
                \left\{
                    \mubf_{c},
                    \mubf_{c^-_1},
                    \ldots,
                    \mubf_{c^-_K}
                \right\}
            \right)
            \mid \Ccal_{\mathrm{sub}}(\{c, c^-_1, \ldots, c^-_K\}) \neq \Ccal
        \right]
        + d(\fbf)
        \label{eq:part-proof-7}
        \\
        \geq
        &
        \hspace{1em}
        \frac{1}{2}
        \Big\{
        \upsilon_{K+1}
        \Ebb_{\substack{c, \{c^-_k \}_{k=1}^K \sim \rho^{K+1}}}
        \hspace{1em}
        \Ebb_{\substack{\xbf \sim \Dc \\ \abf \sim \Acal}}
        \left[
            \ell_{\mathrm{sub}}^{\mu} (\zbf, c, \Ccal_{\mathrm{sub}}) \mid \Ccal_{\mathrm{sub}} = \Ccal
        \right]
        \nonumber
        \\
        & +
        (1-\upsilon_{K+1})
        \Ebb_{\substack{c, \{c^-_k \}_{k=1}^K \sim \rho^{K+1}}}
        \hspace{1em}
        \Ebb_{\substack{\xbf \sim \Dc \\ \abf \sim \Acal}}
        \left[ \ell_{\mathrm{sub}}^{\mu} \left( \zbf, c, \Ccal_{\mathrm{sub}} \right) \mid \Ccal_{\mathrm{sub}} \neq \Ccal \right]
        \nonumber
        \\
        & +
        \Ebb_{\substack{c, \{c^-_k\}_{k=1}^K \sim \rho^{K+1}}}
        \ln \left( \mathrm{Col} \left( c, \{c^-_k\}_{k=1}^{K} \right) + 1 \right)
        \Big\}
        + d(\fbf),
        \label{eq:proof-7}
    \end{align}
    where recall that $\ell_{\mathrm{sub}}^{\mu}$ is defined by \cref{eq:single-sub-class-loss}.

    The first equation is obtained by using the definition of $\upsilon$ to distinguish whether or not the sampled latent classes contain all latent classes $\Ccal$.
    To derive the inequality, we use the following properties of the cross-entropy loss.
    Fixed $\fbf$ and $c$, for all $\Kcal \subseteq \{ 1, \ldots, K \}$,
    the value of $\ell_{\mathrm{Info}}$ holds the following inequality:
    \begin{align}
        \ell_{\mathrm{Info}} \left( \zbf,
        \left\{
            \{ \mubf_{c} \} \cup
            \{
                \mubf_{c^-_k}
            \}_{k=1}^K
        \right\} \right)
        \geq
        \ell_{\mathrm{Info}}
            \left( \zbf,
                \{ \mubf_{c} \} \cup
                \{
                \mubf_{c^-_k}
                \}_{k \in \Kcal}
            \right).
    \end{align}
    Thus we apply the following inequalities to the first and second terms in~\cref{eq:part-proof-7}.
    \begin{align}
        \ell_{\mathrm{Info}} \left( \zbf, \left\{
            \mubf_{c},
            \mubf_{c^-_1},
            \ldots,
            \mubf_{c^-_K}
        \right\} \right)
        &
        \geq
        \frac12
        \left[
            \ell_{\mathrm{sub}}^{\mu} \left( \zbf, c, \Ccal_{\mathrm{sub}} \right) + \ell^{\mu}_{\mathrm{sub}} \left( \zbf, c, \{c\}^{\mathrm{Col}+1}  \right)
        \right]
        \nonumber
        \\
        &
        =
        \frac12
        \left[
            \ell_{\mathrm{sub}}^{\mu} \left( \zbf, c, \Ccal_{\mathrm{sub}} \right) + \ln(\mathrm{Col} + 1)
        \right]
    \end{align}

    By following the notations of $L_{\mathrm{sup}}^{\mu}$ and $L_{\mathrm{sub}}^{\mu}$, we obtain the lower bound~\eqref{eq:proposed-bound} in~\cref{theorem:proposed-lower-bound} from~\cref{eq:proof-7}.
\end{proof}

\subsection{Proof of~\cref{corollary:informal}}
\label{appendix:collision-corollary}
\UpperBoundCollision*

We prove~\cref{corollary:informal} that shows the upper bound of the collision term to understand the effect of the number of negative samples.
Our proof is inspired by~\citet[Lemma A.1]{Arora2019ICML}.
As a general case, we consider the loss function with temperature parameter $t$.

\begin{proof}
    We decompose the self-supervised loss $\Lin$ by using $\upsilon$ to extract the collision term.
    \begin{align}
        & \Lin(\fbf)
        \nonumber
        \\
        &=
        \upsilon_{K+1}
        \left[
            \Ebb_{
                \substack{
                    c, \{c_k^-\}_{k=1}^K \sim \rho^{K+1}
                }
            }
            \Ebb_{
                \substack{
                    \xbf \sim \Dc
                    \\
                    (\abf, \abf^{+}) \sim \Acal^{2}
                }
            }
            \Ebb_{
                \substack{
                    \{ \xbf^-_k \sim \Dcal_{c^-_k} \}_{k=1}^K
                    \\
                    \{ \abf^-_k \}_{k=1}^K \sim \Acal^K
                }
            }
            \left[
                \ell_{\mathrm{Info}}(\zbf, \Zbf) \mid \Ccal_{\mathrm{sub}}(\{c, c^-_1, \ldots, c^-_K\}) = \Ccal
            \right]
        \right]
        \nonumber
        \\
        &
        \hspace{1em}
        +
        (1-\upsilon_{K+1})
        \left[
            \Ebb_{
                \substack{
                    c, \{c_k^-\}_{k=1}^K \sim \rho^{K+1}
                }
            }
            \Ebb_{
                \substack{
                    \xbf \sim \Dc
                    \\
                    (\abf, \abf^{+}) \sim \Acal^{2}
                }
            }
            \Ebb_{
                \substack{
                    \{ \xbf^-_k \sim \Dcal_{c^-_k} \}_{k=1}^K
                    \\
                    \{ \abf^-_k \}_{k=1}^K \sim \Acal^K
                }
            }
            \left[
                \ell_{\mathrm{Info}}(\zbf, \Zbf) \mid \Ccal_{\mathrm{sub}}(\{c, c^-_1, \ldots, c^-_K\}) \neq \Ccal
            \right]
        \right]
        \nonumber
        \\
        & \leq
        \upsilon_{K+1}
        \left[
            \Ebb_{
                \substack{
                    c, \{c_k^-\}_{k=1}^K \sim \rho^{K+1}
                }
            }
            \Ebb_{
                \substack{
                    \xbf \sim \Dc
                    \\
                    (\abf, \abf^{+}) \sim \Acal^{2}
                }
            }
            \Ebb_{
                \substack{
                    \{ \xbf^-_k \sim \Dcal_{c^-_k} \}_{k=1}^K
                    \\
                    \{ \abf^-_k \}_{k=1}^K \sim \Acal^K
                }
            }
            \left[
                \ell_{\mathrm{Info}} \left( \zbf, \{\zbf_{k} \}_{c \neq c^-_k}  \right)
                +
                \ell_{\mathrm{Info}} \left( \zbf, \{\zbf_{k} \}_{c = c^-_k} \right)
                \mid \Ccal_{\mathrm{sub}} = \Ccal
            \right]
        \right]
        \nonumber
        \\
        &
        \hspace{1em}
        + (1-\upsilon_{K+1})
        \left[
            \Ebb_{
                \substack{
                    c, \{c_k^-\}_{k=1}^K \sim \rho^{K+1}
                }
            }
            \Ebb_{
                \substack{
                    \xbf \sim \Dc
                    \\
                    (\abf, \abf^{+}) \sim \Acal^{2}
                }
            }
            \Ebb_{
                \substack{
                    \{ \xbf^-_k \sim \Dcal_{c^-_k} \}_{k=1}^K
                    \\
                    \{ \abf^-_k \}_{k=1}^K \sim \Acal^K
                }
            }
        \left[
            \ell_{\mathrm{Info}} \left( \zbf, \{\zbf_{k} \}_{c \neq c^-_k} \right)
            + \ell_{\mathrm{Info}} \left( \zbf, \{\zbf_{k} \}_{c = c^-_k} \right)
            \mid \Ccal_{\mathrm{sub}} \neq \Ccal
            \right]
        \right]
        \nonumber
        \\
        &
        =
        \hspace{1em}
        \Ebb_{
            \substack{
                c, \{c_k^-\}_{k=1}^K \sim \rho^{K+1}
            }
        }
        \Ebb_{
            \substack{
                \xbf \sim \Dc
                \\
                (\abf, \abf^{+}) \sim \Acal^{2}
            }
        }
        \Ebb_{
            \substack{
                \{ \xbf^-_k \sim \Dcal_{c^-_k} \}_{k=1}^K
                \\
                \{ \abf^-_k \}_{k=1}^K \sim \Acal^K
            }
        }
        \ell_{\mathrm{Info}} \left(\zbf, \left\{ \zbf_{k} \right\}_{c = c^-_k} \right) +
        \mathrm{Reminder}
        \nonumber
        \\
        &
        =
        \hspace{1em}
        \Ebb_{
            \substack{
                c \sim \rho
            }
        }
        \Ebb_{
            \mathrm{Col} \sim \Bcal_c
        }
        \Ebb_{
            \substack{
                \xbf, \{ \xbf_k^- \}_{k=1}^{\mathrm{Col}} \sim \Dcal_c^{\mathrm{Col}+1}
                \\
                (\abf, \abf^{+}, \{ \abf_k^- \}_{k=1}^{\mathrm{Col}}) \sim \Acal^{\mathrm{Col}+2}
            }
        }
        \ell_{\mathrm{Info}}(\zbf, \Zbf) +
        \mathrm{Reminder},
        \label{eq:upper-infoNCE}
    \end{align}
    where $\Bcal_c$ is the probability distribution over $\mathrm{Col}$ conditioned on $c$ with $K$.
    The first equality is done by decomposition with $\upsilon$.
    The inequality is obtained by using the following property of the cross-entropy loss.
    Given $\Kcal \subseteq \{ 1, \ldots, K \}$,
    \begin{align}
        \ell_{\mathrm{Info}}(\zbf, \Zbf)
        \leq
        \ell_{\mathrm{Info}} \left(
            \zbf, \{ \zbf^+ \} \cup \{\zbf_{k}^- \}_{k \in \Kcal}
        \right)
        +
        \ell_{\mathrm{Info}}
        \left(
            \zbf, \{ \zbf^+ \} \cup \{\zbf_{k}^- \}_{k \in \{1, \ldots, K\} \setminus \Kcal}
        \right).
    \end{align}

    We focus on the first term in~\cref{eq:upper-infoNCE}, where the loss takes samples are drawn from the same latent class $c$.
    Fixed $\fbf$ and $c$, let $m = \max_{\zbf_k \in \Zbf } \zbf \cdot \zbf_k / t$ and $\zbf^* = \argmax_{\zbf_k \in \Zbf} \zbf \cdot \zbf_k$.
    \begin{align}
        &\Ebb_{
            \substack{
                c \sim \rho
                \\
                \mathrm{Col} \sim \Bcal_c
            }
        }
        \Ebb_{
            \substack{
                \xbf, \{ \xbf^-_k \}_{k=1}^{\mathrm{Col}} \sim \Dcal_c^{\mathrm{Col}+1}
                \\
                (\abf, \abf^{+}, \{ \abf^-_k \}_{k=1}^{\mathrm{Col}}) \sim \Acal^{\mathrm{Col}+2}
            }
        }
        \ell_{\mathrm{Info}}(\zbf, \Zbf)
        \nonumber
        \\
        &\leq
        \Ebb_{
            \substack{
                c \sim \rho
                \\
                \mathrm{Col} \sim \Bcal_c
            }
        }
        \Ebb_{
            \substack{
                \xbf, \{ \xbf^-_k \}_{k=1}^{\mathrm{Col}} \sim \Dcal_c^{\mathrm{Col}+1}
                \\
                (\abf, \abf^{+}, \{ \abf^-_k \}_{k=1}^{\mathrm{Col}}) \sim \Acal^{\mathrm{Col}+2}
            }
        }
        - \zbf \cdot \zbf^{+} / t + \ln \left[ \exp( \zbf \cdot \zbf^{+} / t) + \mathrm{Col} \exp( m ) \right]
        \nonumber
        \\
        &\leq
        \Ebb_{
            \substack{
                c \sim \rho
                \\
                \mathrm{Col} \sim \Bcal_c
            }
        }
        \Ebb_{
            \substack{
                \xbf, \{ \xbf^-_k \}_{k=1}^{\mathrm{Col}} \sim \Dcal_c^{\mathrm{Col}+1}
                \\
                (\abf, \abf^{+}, \{ \abf^-_k \}_{k=1}^{\mathrm{Col}}) \sim \Acal^{\mathrm{Col}+2}
            }
        }
            \max \left[
                \ln (\mathrm{Col} + 1),
                \ln (\mathrm{Col} + 1) - \zbf \cdot \zbf^{+} / t + m
            \right]
        \nonumber
        \\
        &\leq
        \Ebb_{
            \substack{
                c \sim \rho
                \\
                \mathrm{Col} \sim \Bcal_c
            }
        }
        \Ebb_{
            \substack{
                \xbf, \{ \xbf^-_k \}_{k=1}^{\mathrm{Col}} \sim \Dcal_c^{\mathrm{Col}+1}
                \\
                (\abf, \abf^{+}, \{ \abf^-_k \}_{k=1}^{\mathrm{Col}}) \sim \Acal^{\mathrm{Col}+2}
            }
        }
        \ln (\mathrm{Col} + 1)
        +
        \frac1t | \zbf \cdot \zbf^* - \zbf \cdot \zbf^{+} |
        .
    \end{align}
    Note that $\ln (\mathrm{Col} + 1)$ is the constant depending on the number of duplicated latent classes,
    then we focus on $ | \zbf \cdot \zbf^* - \zbf \cdot \zbf^{+} | $.
    \begin{align}
        \Ebb_{
            \substack{
                \xbf, \{ \xbf^-_k \}_{k=1}^{\mathrm{Col}} \sim \Dcal_c^{\mathrm{Col}+1}
                \\
                (\abf, \abf^{+}, \{ \abf^-_k \}_{k=1}^{\mathrm{Col}}) \sim \Acal^{\mathrm{Col}+2}
            }
        }
        | \zbf \cdot \zbf^* - \zbf \cdot \zbf^{+} |
        \nonumber
        &\leq
        (\mathrm{Col} + 1)
        \Ebb_{\substack{
            \xbf \sim \Dcal_c \\
            (\abf, \abf^{+}) \sim \Acal^2
        }}
        \Ebb_{
            \substack{
                \xbf' \sim \Dcal_c
                \\
                \abf' \sim \Acal
            }
        }
            | \zbf \cdot \zbf' - \zbf \cdot \zbf^{+} |.
    \end{align}
    Therefore, $\alpha = \ln(\mathrm{Col}+1)$ and $\beta = \frac{\mathrm{Col}+1}{t}$.
\end{proof}

\subsection{Proof of~\cref{proposition:optimal_L_sub}}
\label{appendix:proof-optimality-of-L_sub}

\LSubOptimality*

\begin{proof}
    Suppose an observed set of data for supervised sub-class loss $ \{ (\xbf_i, \Ccal_{\mathrm{sub}, i}(\{c_i, c^-_1, \ldots, c^-_K\})) \}_{i=1}^M$, where $\Ccal_{\mathrm{sub}, i}$ is a subset of $\Ccal$ such that $\Ccal_{\mathrm{sub}, i}$ contains $c_i$ of $\xbf_i$ and it holds $2 \leq |\Ccal_{\mathrm{sub},i}| \leq K+1$.
    We take derivatives of the empirical sub-class loss $\frac{1}{M} \sum_{i=1}^M - \ln \frac{\exp(q_{c_i})}{\sum_{j \in \Ccal_{\mathrm{sub}, i} } \exp(q_j) }$ with respect to each element of $\qbf$, then the stationary points are $\forall c \in \Ccal$,
    \begin{align}
        & \sum_{n_1 \in \Ccal \setminus c }
        N_{c, n_1}
        \left(
        1
        -
        2
        \frac{
            \exp(q_c)
        }{
            {\displaystyle
                \sum_{ j \in \{c, n_1\} } \exp(q_j)
            }
        }
        \right)
        \nonumber
        \\
        +
        & \sum_{n_1, n_2 \in \Ccal \setminus c }
        N_{c, n_1, n_2}
        \left(
        1
        -
        3
        \frac{
            \exp(q_c)
        }{
            {\displaystyle
                \sum_{ j \in \{c, n_1, n_2\} } \exp(q_j)
            }
        }
        \right)
        \nonumber
        \\
        \vdots
        \nonumber
        \\
        +
        & \sum_{n_1, \ldots, n_K \in \Ccal \setminus c }
        N_{c, n_1, \ldots, n_K}
        \left(
        1
        -
        (K+1)
        \frac{
            \exp(q_c)
        }{
            {\displaystyle
                \sum_{ j \in \{c, n_1, \ldots, n_K \} } \exp(q_j)
            }
        }
        \right) = 0,
        \label{eq:derivative-subclass-loss}
    \end{align}
    where $N_{c, n_1, \ldots, n_K }$ is the frequency of $\Ccal_{\mathrm{sub}, i}$ such that $\sum_{i=1}^M \Ibb [ \Ccal_{\mathrm{sub}, i} = \{ c_i \} \cup \{n_1, \ldots, n_K\}]$.
    As a result, the optimal $\qbf^*$ is a constant function with the same value.
    For supervised loss defined in~\cref{eq:supervised-loss}, the optimal score function of class $y$ is $q_y = \ln N_y + \mathrm{Constant}$, where $N_y$ is the number of samples whose label is $y$ and $\mathrm{Constant} \in \Rbb$.
    From the uniform assumption, the optimal $\qbf^*$ is the minimizer of $L_{\mathrm{sup}}$.
\end{proof}

\section{Expected Number of Negative Samples to Draw all Supervised Labels}
\label{sec:expeted_number}
We assume that we sample a latent class from $\rho$ independently.
Let $\rho(c) \in [0, 1]$ be a probability that $c$ is sampled.
\citet[Theorem 4.1]{Flajolet1992DAM} show the expected value of $K+1$ to sample all latent class in $\mathcal{C}$ is defined by
\begin{align}
    \Ebb [K+1] = \int_{0}^{\infty}
    \left(  1 - \prod_{c \in \mathcal{C}} \left[ 1-\exp(-\rho(c) x ) \right]  \right)
    \mathrm{d}x.
    \label{eq:coupon-expectation}
\end{align}

\cref{tab:coupon_numbers} shows the expected number of sampled latent classes on a popular classification dataset when the latent class is the same as the supervised class.
For ImageNet, we use the relative frequency of supervised classes in the training dataset as $\rho$.
According to~\cref{tab:coupon_numbers}, the empirical number of negative samples is supposed to be natural, for example, $K \geq 8\,096$ in experiments by~\citet{Chen2020ICML,He2020CVPR}.

\begin{table}
    \centering
    \caption{The expected number of samples to draw all supervised classes.}
    \label{tab:coupon_numbers}
    \begin{tabular}{rrr}
        \toprule
        Dataset & \# classes & $\mathbb{E} [K+1]$ \\
        \midrule
        AGNews  &       $4$ &      $9$ \\
        CIFAR-10  &     $10$ &     $30$ \\
        CIFAR-100 &    $100$ &    $519$ \\
        ImageNet  & $1\,000$ & $7\,709$ \\
        \bottomrule
    \end{tabular}
\end{table}

\section{Additional Experimental Results}
\label{sec:additional_experiments}

\subsection{Experimental Results related to Upper Bound of Collision Term in~\cref{corollary:informal}}

As shown in~\cref{tab:bounds}, the upper bound values of the collision term did not increase by increasing $K$.
We further analyzed representations in terms of the number of negative samples $K$.

As we discuss in the~\cref{sec:collison-upper-bound}, we expect that cosine similar values between samples in the same latent class do not change by increasing $K$.
To confirm that, \cref{fig:cosine_hist_per_class_cifar10,fig:cosine_hist_per_class_cifar100} show histgrams of cosine similarity values between all pairs of feature representations in the same supervised class.
The cosine similarity values do not change by increasing $K$ in practice.
We used feature representation $\widehat{\fbf}(\xbf)$ on the training dataset.
We used the number of bins of each histogram as the square root of the number of cosine similarity values in each class.
We did not use duplicated similarity values: similarity between $\widehat{\fbf}(\xbf_i)$ and $\widehat{\fbf}(\xbf_j)$, where $i \geq j$.
For the results of CIFAR-100 dataset on~\cref{fig:cosine_hist_per_class_cifar100}, we show the histograms for the first $10$ supervised classes due to the page width.

To understand more details of the feature representations, \cref{fig:norm_hist} shows L2 norms of \textit{unnormalized} feature representations extracted from the training data of CIFAR-10 and CIFAR-100.
We used the same feature representation $\widehat{\fbf}(\xbf)$ without L2 normalization as in \cref{fig:cosine_hist_per_class_cifar10,fig:cosine_hist_per_class_cifar100}.
Unlike cosine similarity and the collision upper bound values, the norm values increase by increasing $K$ on CIFAR-10 with all $K$ and CIFAR-100 with smaller $K$.
We used the square root of the number of training samples, $223$, as the number of bins of each histogram.

To show the tendency of cosine similarity and norms by increasing $K$, \cref{fig:relative-change} shows the relative change of a distance between histograms.
\cref{fig:wasserstein_cifar10} shows the relative change values for the CIFAR-10 dataset based on~\cref{fig:cosine_hist_per_class_cifar10,fig:norm_hist_cifar10}.
Similarity, \cref{fig:wasserstein_cifar100} shows the relative change values for the CIFAR-100 dataset based on~\cref{fig:cosine_hist_per_class_cifar100,fig:norm_hist_cifar100}.
For the representations extracted from the CIFAR-10 dataset, we calculated the first Wasserstein distance between the histogram of $K+1=32$ and the other $K$ values with Scipy~\citep{2020SciPy-NMeth}.
We used the distance between the histograms of $K+1=32$ and $K+1=64$ as the reference value of the relative change.
We calculated the averaged Wasserstein distance values among the supervised classes by random seed and took the averaged values over the three different random seeds.
For CIFAR-100, we used the Wasserstein distance between histograms of $K+1=128$ and $K+1=256$ as the reference value.
Note that we used minimum and maximum norm values to unify the range of histograms among different $K$ since L2 norm values are not bounded above.

\begin{figure}[t]
    \centering
    \includegraphics[width=1.0\textwidth]{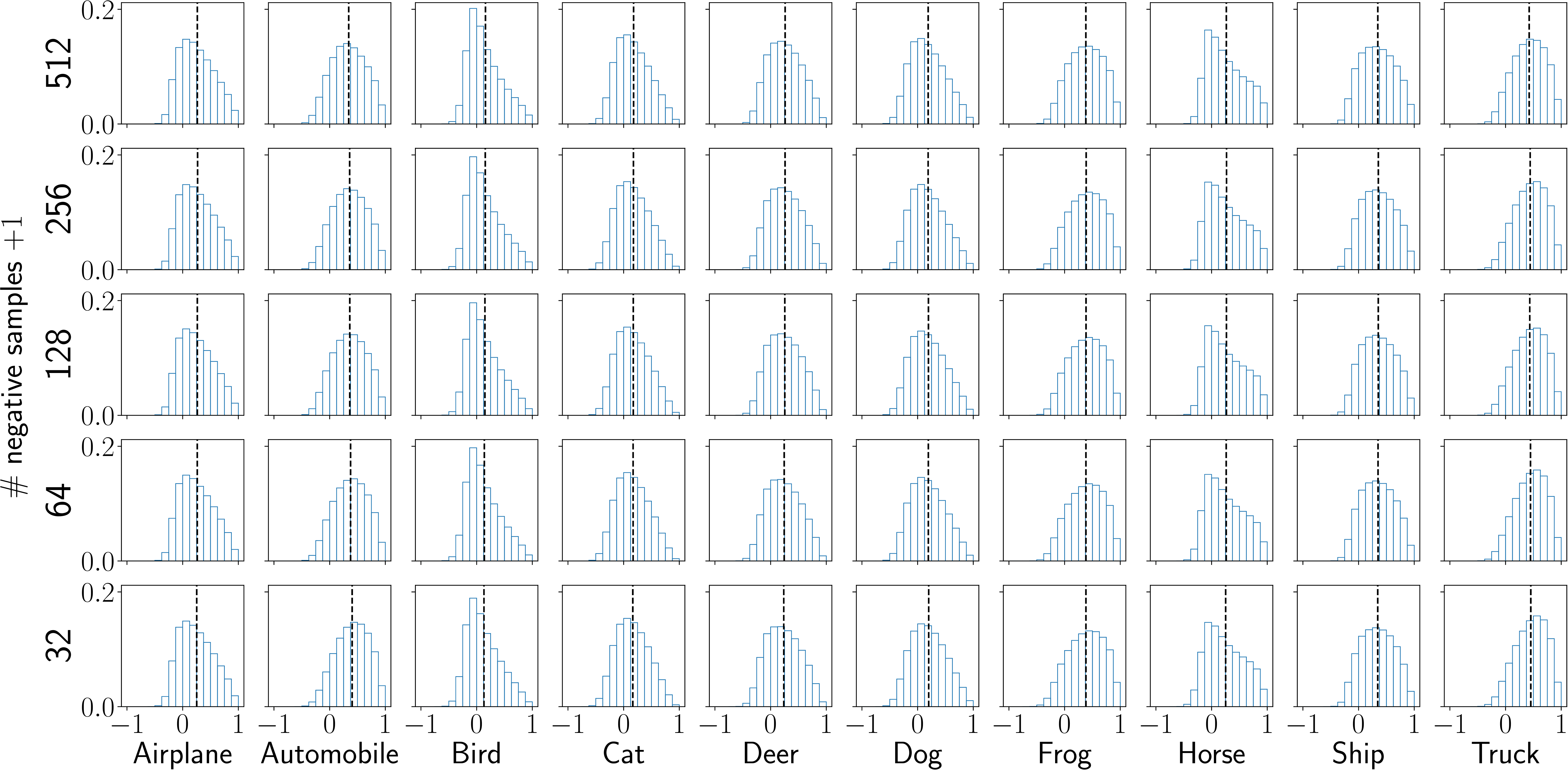}
    \caption{
        Histgrams of cosine similarity between the features in the same supervised class on CIFAR-10.
        The vertical lines represent the mean values.
    }
    \label{fig:cosine_hist_per_class_cifar10}
    \vspace{2ex}
    \includegraphics[width=1.0\textwidth]{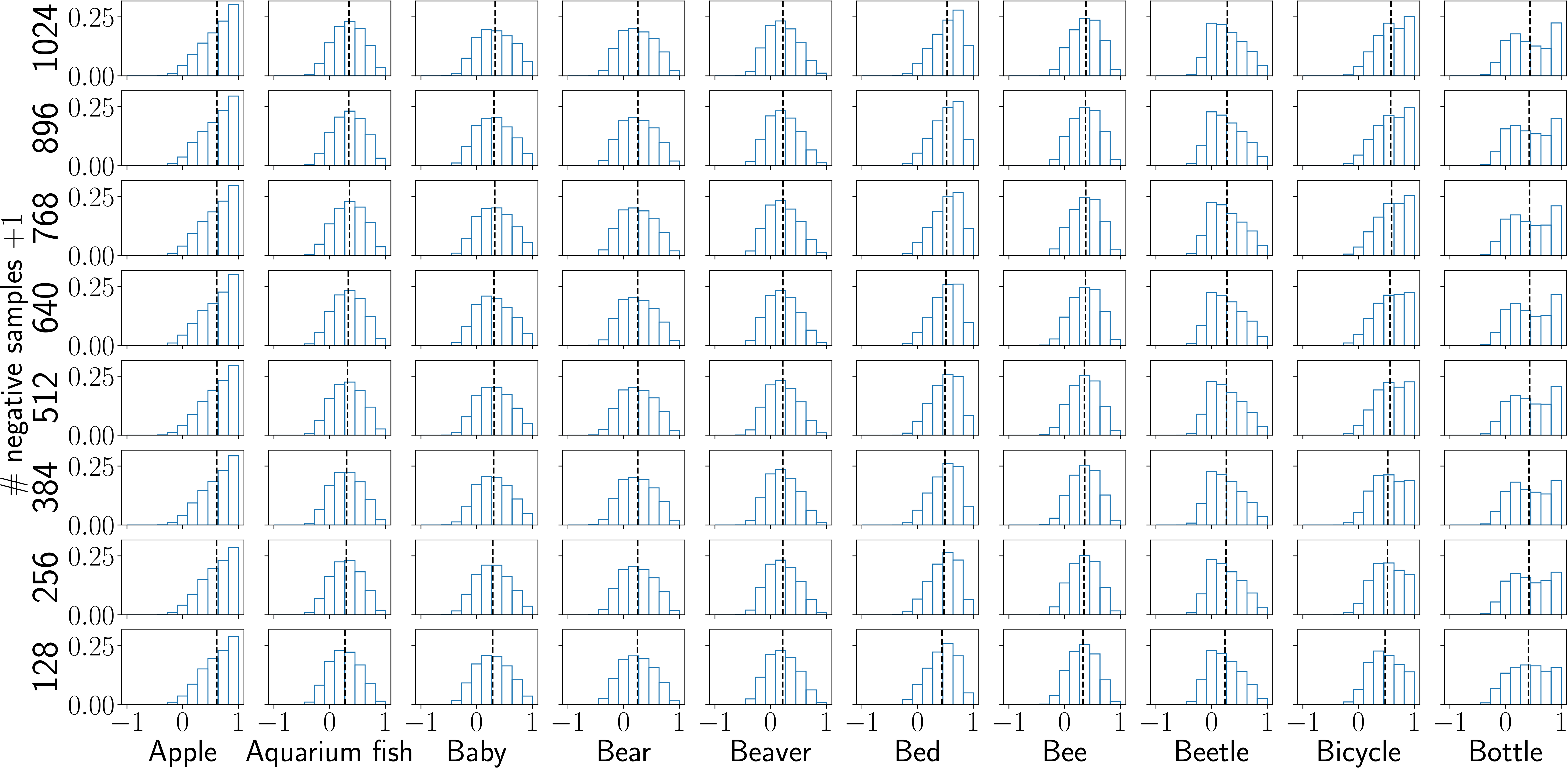}
    \caption{
        Histgrams of cosine similarity between the features in the same supervised class on CIFAR-100.
        The vertical lines represent the mean values.
    }
    \label{fig:cosine_hist_per_class_cifar100}
\end{figure}

\begin{figure}
    \centering
    \begin{subfigure}[t]{0.49\textwidth}
        \centering
        \includegraphics[width=\textwidth]{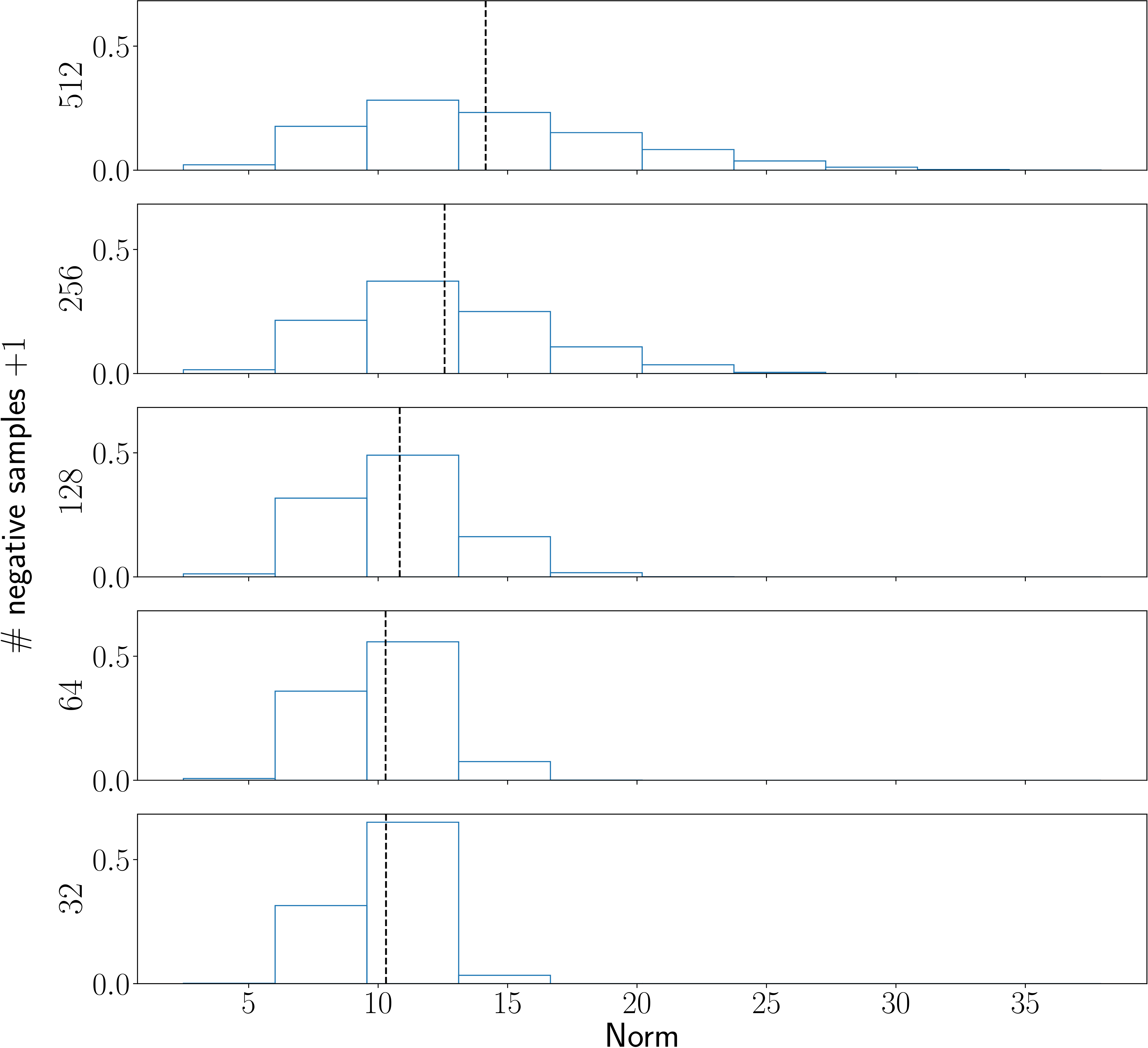}
        \caption{
            CIFAR-10
        }
        \label{fig:norm_hist_cifar10}
    \end{subfigure}
    \begin{subfigure}[t]{0.49\textwidth}
        \centering
        \includegraphics[width=\textwidth]{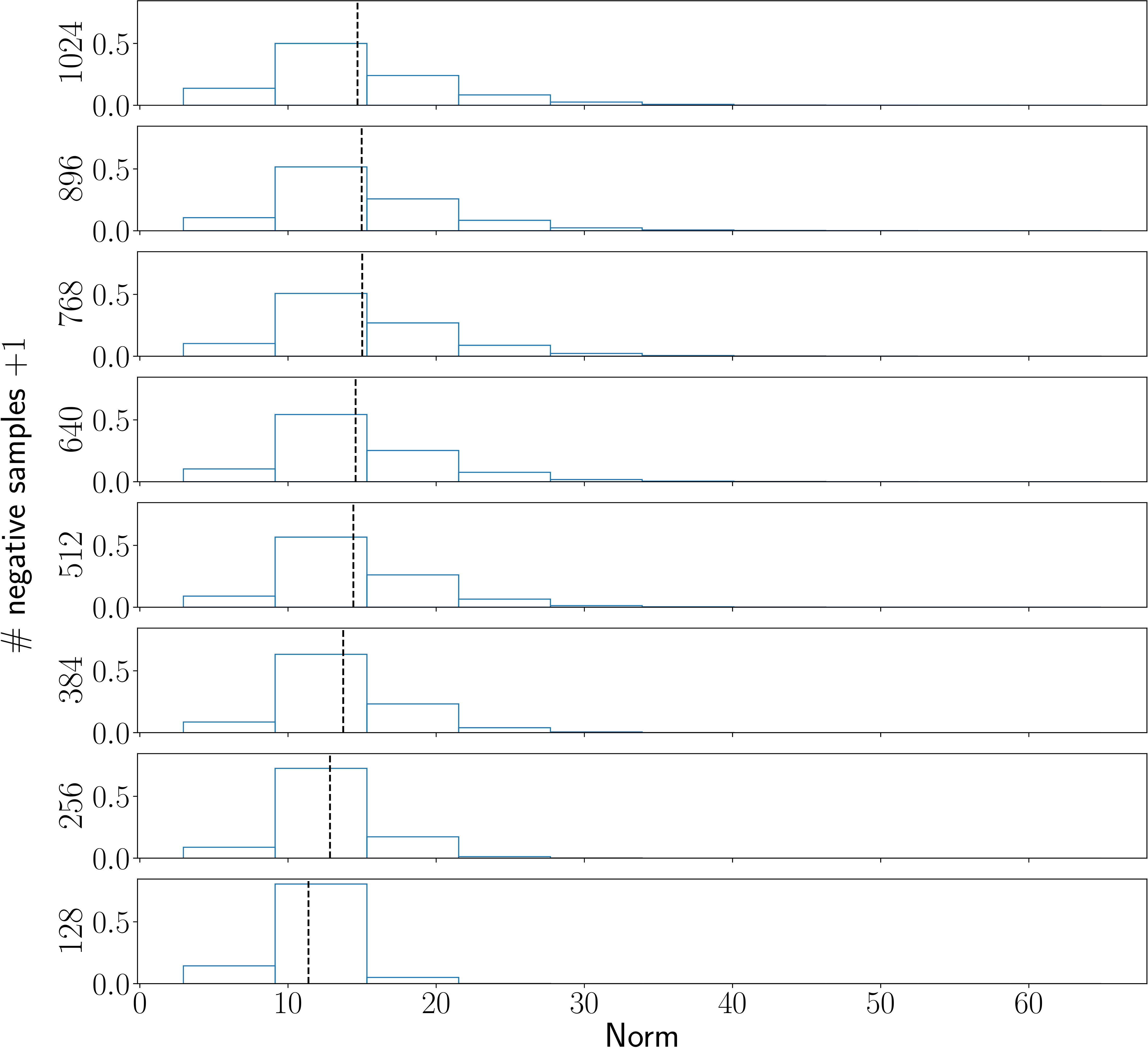}
        \caption{
            CIFAR-100
        }
        \label{fig:norm_hist_cifar100}
    \end{subfigure}
    \caption{
        Histgrams of L2 norm values of unnormalized feature representations.
        The vertical lines represent the mean values.
    }
    \label{fig:norm_hist}
\end{figure}

\begin{figure}
    \centering
    \begin{subfigure}[t]{0.49\textwidth}
        \centering
        \includegraphics[width=\textwidth]{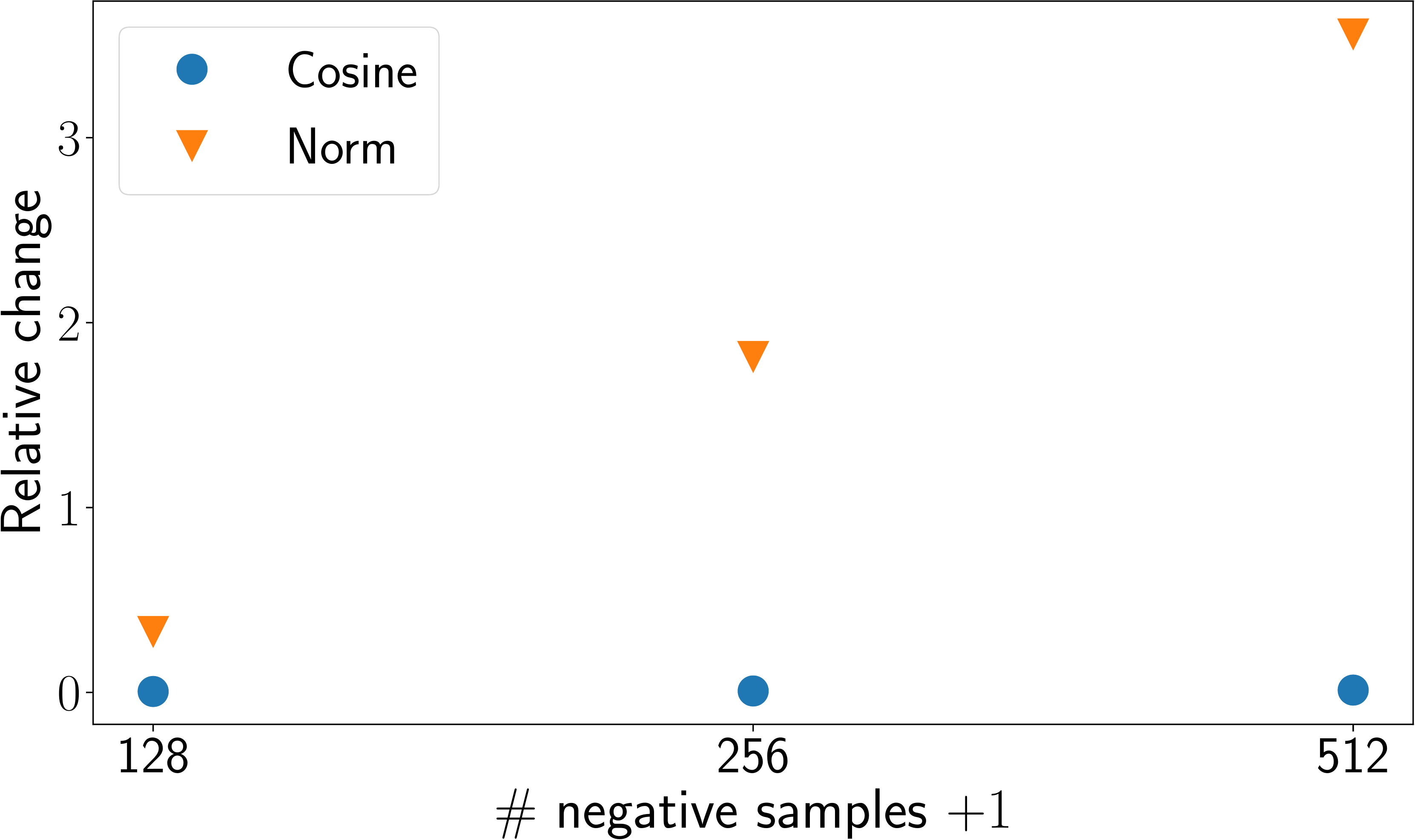}
        \caption{
            Relative change of the first Wasserstein distance between the histogram of $K+1=32$ and the histogram of the other $K+1$ on CIFAR-10.
            The reference value is the distance between the histograms of $K+1=32$ and $K+1=64$.
        }
        \label{fig:wasserstein_cifar10}
    \end{subfigure}
    \hfill
    \begin{subfigure}[t]{0.49\textwidth}
        \centering
        \includegraphics[width=\textwidth]{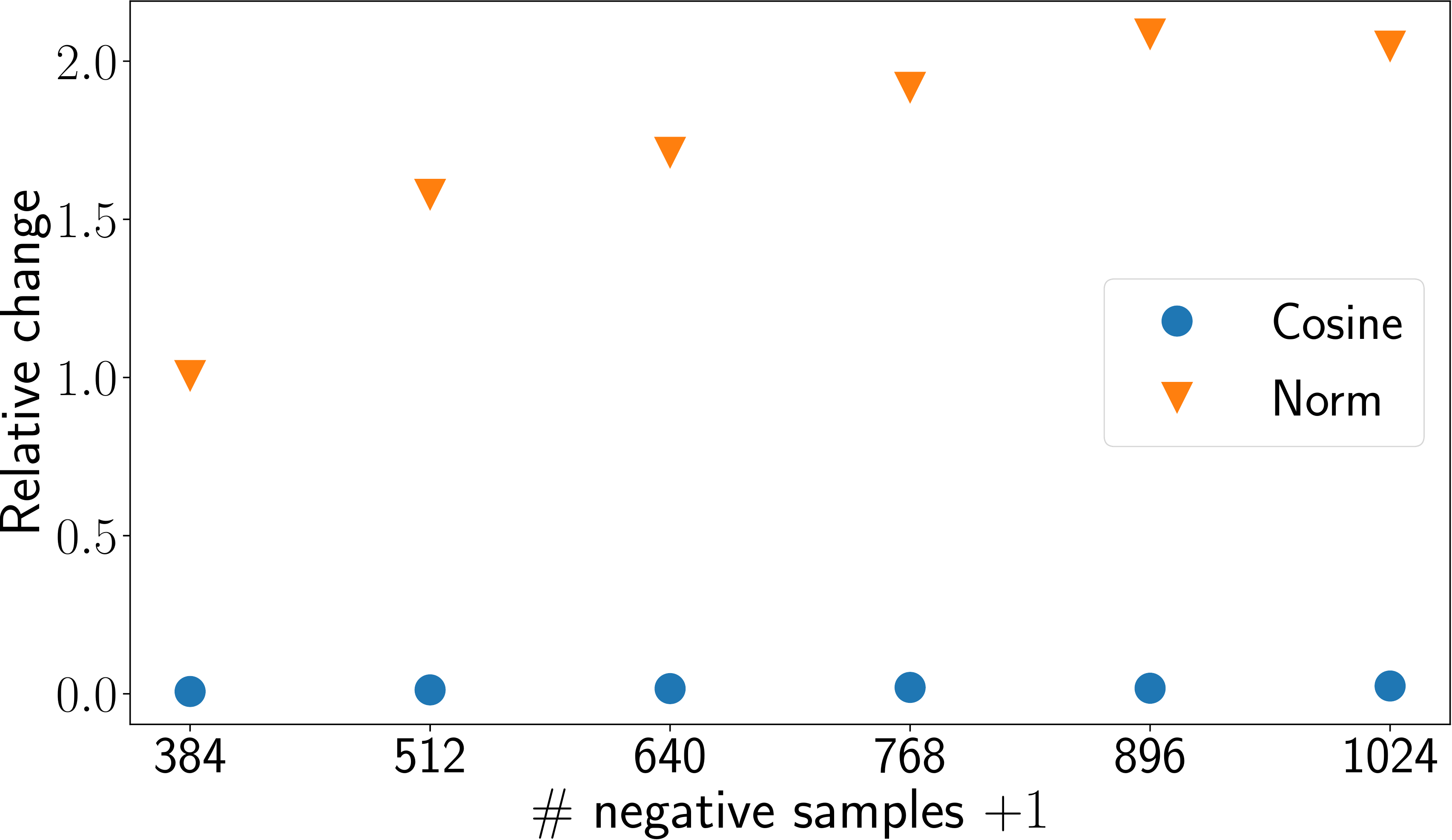}
        \caption{
            Relative change of the first Wasserstein distance between the histogram of $K+1=128$ and the histogram of the other $K+1$ on CIFAR-100.
            The reference is the distance between the histograms of $K+1=128$ and $K+1=256$.
        }
        \label{fig:wasserstein_cifar100}
    \end{subfigure}
    \caption{Relative change of the first Wasserstein distance between histograms by increasing the number of negative samples.}
    \label{fig:relative-change}
\end{figure}

\subsection{Details of Collision bound Calculation on \cref{tab:bounds}}

To calculate the upper bound~\eqref{eq:upper-bound-collision} of the collision term without coefficient terms $\alpha,\beta$,
we sampled two data augmentations $\abf$ and $\abf^+$ per each training sample $\xbf$.
The data augmentation distribution was the same as described in~\cref{sec:experiments}.
We approximated the upper bound term as follows:
\begin{align}
    \frac{1}{N}
    \sum_{i=1}^{N}
    \frac{1}{N_{y_i}}
    \sum_{j \neq i}^{N}
    \Ibb \left[ y_i = y_j \right]
    \times
    \left| \fbf(\abf_i(\xbf_i)) \cdot \left[
        \fbf(\abf_j(\xbf_j)) - \fbf(\abf_i^{+} (\xbf_i))
    \right] \right|,
    \label{eq:empirical-collision-bound}
\end{align}
where recall that $N_y$ is the number of samples whose label is $y$.
We reported the averaged value of \cref{eq:empirical-collision-bound} with respect to the random seeds on~\cref{tab:bounds}.

\subsection{Comprehensive Results of~\cref{tab:bounds}}

\cref{tab:cifar10-all-bound,tab:cifar100-all-bound} show the comprehensive results of~\cref{tab:bounds} for CIFAR-10 and CIFAR-100, respectively.
\cref{fig:comparison_bound_curve_cifar10} shows upper bounds of supervised loss and the linear accuracy on the validation dataset.

\begin{figure}
    \centering
    \begin{subfigure}[b]{0.47\textwidth}
        \centering
        \includegraphics[width=\textwidth]{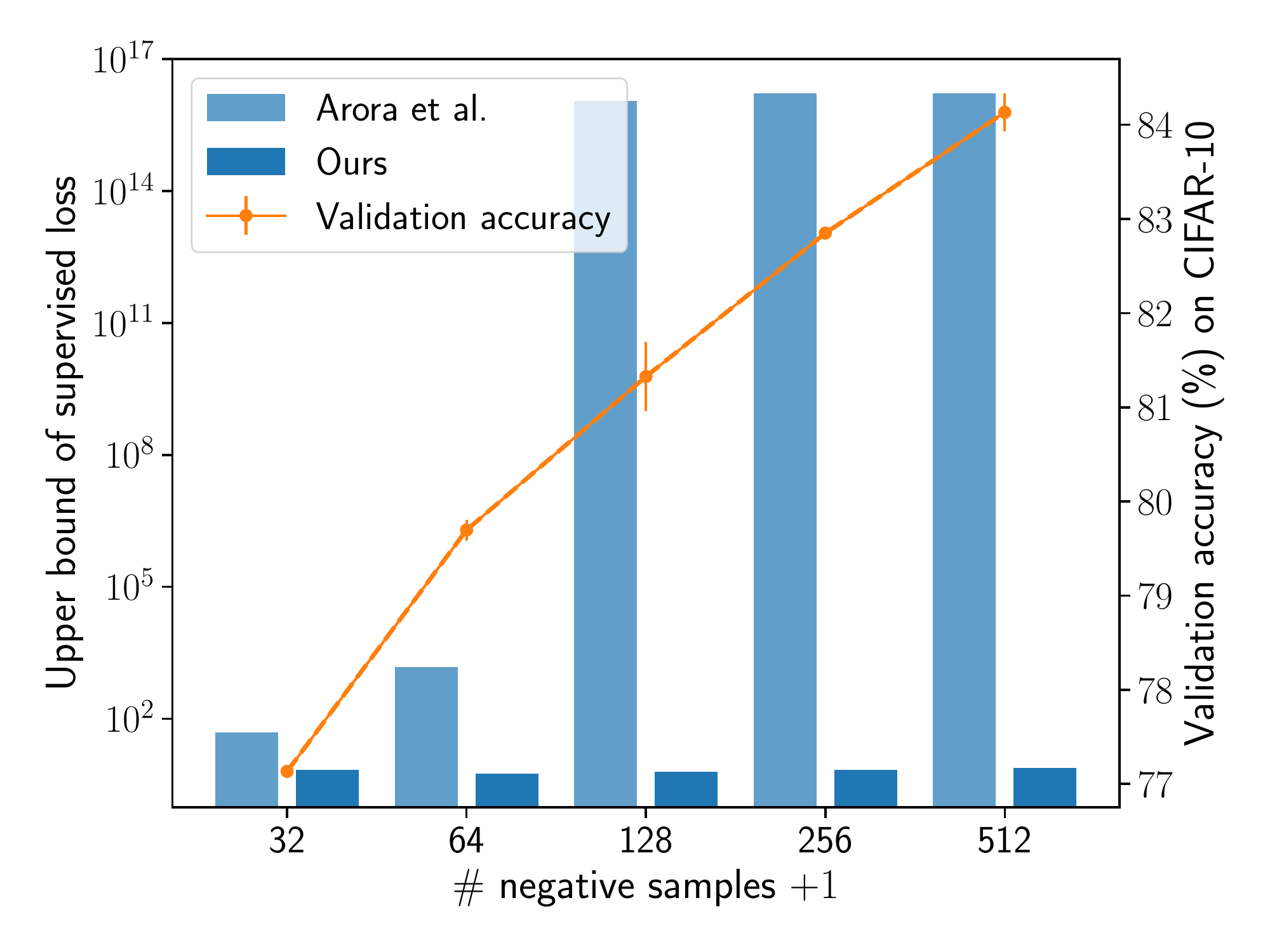}
        \caption{
            CIFAR-10.
        }
        \label{fig:comparison_bound_curve_cifar10}
    \end{subfigure}
    \begin{subfigure}[b]{0.47\textwidth}
        \centering
        \includegraphics[width=\textwidth]{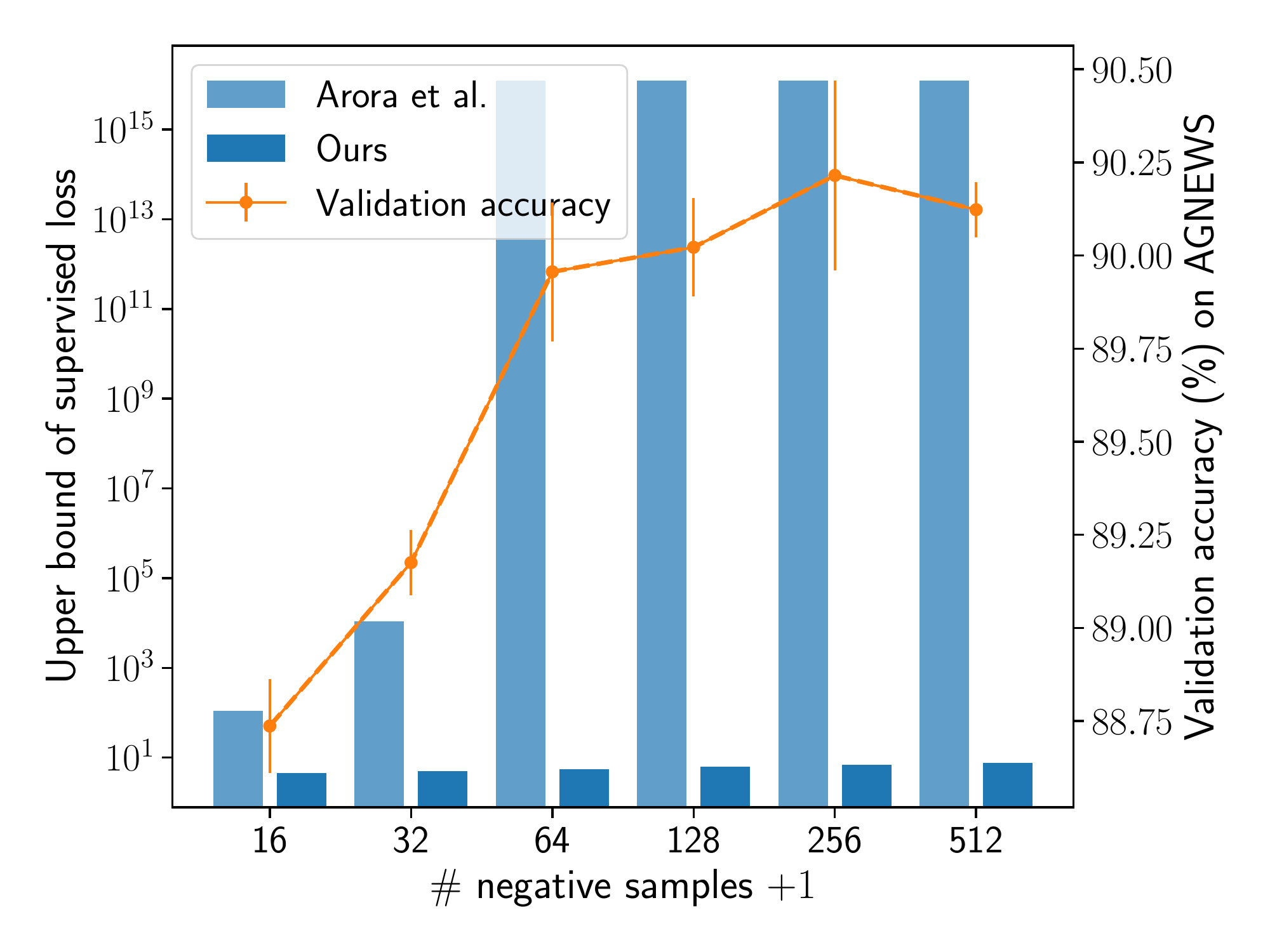}
        \caption{
            AGNews
        }
        \label{fig:comparison_bound_curve_ag_news}
    \end{subfigure}

    \caption{
            Upper bounds of supervised loss and validation accuracy.
    }
    \label{fig:additional-upper-bounds-plots}
\end{figure}

\begin{table}
    \centering
    \caption{
        The bound values on CIFAR-10 experiments with different $K+1$.
        CURL bound and its quantities are shown with $\dagger$.
        The proposed ones are shown without $\dagger$.
        Since the proposed collision values are half of $^\dagger$Collision, they are omitted.
        The reported values contain their coefficient except for Collision bound.
    }
    \label{tab:cifar10-all-bound}
    \begin{tabular}{llrrrrr}
\toprule
 $K+1$                         &                                &    $32$ &    $64$ &   $128$ &   $256$ &   $512$ \\
\midrule
 $\tau$                         &                                &  $0.96$ &  $1.00$ &  $1.00$ &  $1.00$ &  $1.00$ \\
 $\upsilon$                     &                                &  $0.69$ &  $0.99$ &  $1.00$ &  $1.00$ &  $1.00$ \\
 $\mu$ acc                      &                                & $72.75$ & $75.30$ & $77.22$ & $78.60$ & $80.12$ \\
 Linear acc                  &                                & $77.13$ & $79.70$ & $81.33$ & $82.85$ & $84.13$ \\
 Linear acc w/o              &                                & $82.02$ & $83.88$ & $85.43$ & $86.68$ & $87.66$ \\
 $\Lin$                       & \cref{eq:test-self-sup-loss}    &  $2.02$ &  $2.64$ &  $3.29$ &  $3.96$ &  $4.64$ \\
 $d(\mathbf{f})$                      & \cref{eq:mean-supervised}       & $-1.16$ & $-1.17$ & $-1.18$ & $-1.18$ & $-1.19$ \\
 $^{\dagger}\Lin$ bound          & \cref{eq:decompose-sup}         &  $0.23$ &  $0.76$ &  $1.41$ &  $2.08$ &  $2.75$ \\
 \hspace{2ex} $^{\dagger}$Collision &                                &  $1.32$ &  $1.93$ &  $2.58$ &  $3.26$ &  $3.94$ \\
 \hspace{2ex} $^{\dagger}L^{\mu}_{\mathrm{sup}}$    &                                &  $0.05$ &  $0.00$ &  $0.00$ &  $0.00$ &  $0.00$ \\
 \hspace{2ex} $^{\dagger}L^{\mu}_{\mathrm{sub}}$    &                                &  $0.01$ &  $0.00$ &  $0.00$ &  $0.00$ &  $0.00$ \\
 $\Lin$ bound                 & \cref{eq:proposed-bound}        &  $0.39$ &  $0.70$ &  $1.02$ &  $1.35$ &  $1.69$ \\
 \hspace{2ex} $L^{\mu}_{\mathrm{sup}}$           &                                &  $0.63$ &  $0.90$ &  $0.91$ &  $0.90$ &  $0.90$ \\
 \hspace{2ex} $L^{\mu}_{\mathrm{sub}}$           &                                &  $0.26$ &  $0.01$ &  $0.00$ &  $0.00$ &  $0.00$ \\
 Collision bound             & \cref{eq:upper-bound-collision} &  $0.60$ &  $0.61$ &  $0.61$ &  $0.62$ &  $0.62$ \\
\bottomrule
\end{tabular}
\end{table}

\begin{table}
    \centering
    \footnotesize
    \caption{
        The bound values on CIFAR-100 experiments with different $K+1$.
        CURL bound and its quantities are shown with $\dagger$.
        The proposed ones are shown without $\dagger$.
        Since the proposed collision values are half of $^\dagger$Collision, they are omitted.
        The reported values contain their coefficient except for Collision bound.
    }
    \label{tab:cifar100-all-bound}
    \begin{tabular}{llrrrrrrrr}
\toprule
 $K+1$                         &                                &   $128$ &   $256$ &   $384$ &   $512$ &   $640$ &   $768$ &   $896$ &   $1024$ \\
\midrule
 $\tau$                         &                                &  $0.72$ &  $0.92$ &  $0.98$ &  $0.99$ &  $1.00$ &  $1.00$ &  $1.00$ &   $1.00$ \\
 $\upsilon$                     &                                &  $0.00$ &  $0.00$ &  $0.15$ &  $0.62$ &  $0.90$ &  $0.98$ &  $1.00$ &   $1.00$ \\
 $\mu$ acc                      &                                & $32.74$ & $34.22$ & $35.27$ & $35.98$ & $36.58$ & $37.10$ & $36.84$ &  $37.50$ \\
 Linear acc                  &                                & $42.01$ & $43.59$ & $44.28$ & $45.09$ & $45.72$ & $46.06$ & $45.50$ &  $46.52$ \\
 Linear acc w/o              &                                & $57.92$ & $58.91$ & $59.51$ & $59.30$ & $59.35$ & $59.62$ & $59.11$ &  $59.46$ \\
 $\Lin$                       & \cref{eq:test-self-sup-loss}    &  $3.32$ &  $3.98$ &  $4.38$ &  $4.66$ &  $4.88$ &  $5.06$ &  $5.21$ &   $5.34$ \\
 $d(\mathbf{f})$                      & \cref{eq:mean-supervised}       & $-0.99$ & $-0.98$ & $-0.97$ & $-0.97$ & $-0.96$ & $-0.95$ & $-0.96$ &  $-0.95$ \\
 $^{\dagger}\Lin$ bound          & \cref{eq:decompose-sup}         &  $0.72$ &  $0.47$ &  $0.58$ &  $0.79$ &  $0.98$ &  $1.15$ &  $1.29$ &   $1.43$ \\
 \hspace{2ex} $^{\dagger}$Collision &                                &  $0.69$ &  $1.15$ &  $1.48$ &  $1.73$ &  $1.93$ &  $2.10$ &  $2.24$ &   $2.37$ \\
 \hspace{2ex} $^{\dagger}L^{\mu}_{\mathrm{sup}}$    &                                &  $0.00$ &  $0.00$ &  $0.01$ &  $0.01$ &  $0.01$ &  $0.00$ &  $0.00$ &   $0.00$ \\
 \hspace{2ex} $^{\dagger}L^{\mu}_{\mathrm{sub}}$    &                                &  $1.02$ &  $0.30$ &  $0.07$ &  $0.01$ &  $0.00$ &  $0.00$ &  $0.00$ &   $0.00$ \\
 $\Lin$ bound                 & \cref{eq:proposed-bound}        &  $1.18$ &  $1.53$ &  $1.72$ &  $1.86$ &  $1.96$ &  $2.05$ &  $2.12$ &   $2.19$ \\
 \hspace{2ex} $L^{\mu}_{\mathrm{sup}}$           &                                &  $0.00$ &  $0.00$ &  $0.30$ &  $1.21$ &  $1.76$ &  $1.92$ &  $1.95$ &   $1.95$ \\
 \hspace{2ex} $L^{\mu}_{\mathrm{sub}}$           &                                &  $1.82$ &  $1.93$ &  $1.66$ &  $0.75$ &  $0.20$ &  $0.04$ &  $0.01$ &   $0.00$ \\
 Collision bound             & \cref{eq:upper-bound-collision} &  $0.52$ &  $0.52$ &  $0.52$ &  $0.51$ &  $0.51$ &  $0.51$ &  $0.51$ &   $0.51$ \\
\bottomrule
\end{tabular}
\end{table}

\subsection{Experiments on Natural Language Processing}
\label{sec:nlp-experiments}

\citet{Arora2019ICML} conducted experiments for contrastive unsupervised sentence representation learning on Wiki-3029 dataset that contains $3\,029$ classes.
However, we cannot use this dataset to perform similar experiments to CIFAR-10/100.
This is because we need a huge number of negative samples to perform the same control experiments as CIFAR-10/100's experiments from the coupon collector’s problem.
Concretely, we need to more than $26\,030$ negative samples from \cref{eq:coupon-expectation}.
Such large negative samples cause an optimization issue in practice.
In addition, unlike self-supervised learning on vision,
self-supervised learning algorithms on text do not use a large number of negative samples in practice.
For example, \citet{Logeswaran2018ICLR,Gao2021EMNLP} use at most $399$ and $1\,023$ negative samples in their experiments respectively.%
\footnote{Precisely, all other samples in the same mini-batch are treated as negative samples.}

\paragraph{Dataset and Data Augmentation}
We used the AGNews classification dataset~\citep{Zhang2015NeurIPS} due to the aforementioned difficulty of the experiment on the Wiki-3029 dataset.
The dataset contains $4$ supervised classes and $120\,000$ training samples and $7\,600$ validation samples.
As pre-processing, we used a tokenizer of torchtext with its default option: \texttt{basic\_english}.
After that, we removed words whose frequency was less than $5$ in the training dataset.
As a data augmentation, we randomly delete $20\%$ words in each samples~\citep{Gao2021EMNLP}.
We tried a different data augmentation that randomly replaced $20\%$ words with one of the predefined similar words inspired by~\citet{Wang2015EMNLP}.
To obtain similar words, we used the five most similar words in pre-trained word vectors on the Common Crawl dataset~\citep{Mikolov2018LREC}.
If a word in the training data of the AGNews dataset did not exist in the pre-trained word vector's dictionary,
we did not replace the word.

\paragraph{Self-supervised Learning}
To compare the performance of supervised classification to the reported results on the AGNews dataset, we modify the supervised fastText model~\citep{Joulin2017EACL} to model a feature encoder $\fbf$.
The self-supervised model consists of a word embedding layer, an average pooling layer over the words, and the same nonlinear projection head as the CIFAR 10/100 experiment. The number of hidden units in the embedding layer and projection head was $50$.

We only describe the difference from the vision experiment because we mainly follow the vision experiment.
We trained the encoder by using PyTorch on a single GPU because the training was fast enough on a single GPU due to the model's simplicity.
The number of epochs was $100$.
We used linear warmup at each step during the first $10$ epochs.
We did not apply weight decay by following~\citet{Joulin2017EACL}.
For the base learning rate $\texttt{lr} \in \{ 1.0, 0.1 \}$ and initialization of learning rate,
either $\texttt{lr} \times \frac{K+1}{256}$ or $\texttt{lr} \times \sqrt{K+1}$, which are used in~\citet{Chen2020ICML}.

\paragraph{Linear Evaluation}
We followed the same optimization procedure as in the CIFAR 10/100 experiments except for the number of epochs that was $10$ and performing single GPU training.
We used the mean classifier's validation accuracy as a hyper-parameter selection criterion since we perform grid-search among two types of data augmentation, two learning rates, and two learning rate initialization methods.
Note that the deletion data augmentation performed better than the replacement one.

\paragraph{Bound Evaluation}
Same as in the CIFAR 10/100 experiments except for $K+1 \in \{32, 64, 128, 256, 512 \}$ since the number of classes is $4$.

\paragraph{Results}
\cref{tab:ag-all-bound} shows the quantities of bound-related values.
When $K$ was too small, both lower bounds were vacuous because an InfoNCE value should be non-negative.
However, the lower bounds were negative.
This vacuousness comes from $d(\fbf)$ that takes negative value.
\cref{fig:comparison_bound_curve_ag_news} shows the upper bound of mean classifier and linear accuracy on the validation dataset by rearranging InfoNCE's lower bounds.
\cref{fig:comparison_bound_curve_ag_news} shows the similar tendency to \cref{fig:comparison_bound_curve}; by increasing $K$, the existing bound explodes, but the proposed bound does not.

\begin{table}
    \centering
    \footnotesize
    \caption{
        The bound values on AGNews experiments with different $K+1$.
        CURL bound and its quantities are shown with $\dagger$.
        The proposed ones are shown without $\dagger$.
        Since the proposed collision values are half of $^\dagger$Collision, they are omitted.
        The reported values contain their coefficient except for Collision bound.
    }
    \label{tab:ag-all-bound}
    \begin{tabular}{llrrrrrr}
\toprule
 $K+1$                         &                                &    $16$ &    $32$ &    $64$ &   $128$ &   $256$ &   $512$ \\
\midrule
 $\tau$                         &                                &  $0.99$ &  $1.00$ &  $1.00$ &  $1.00$ &  $1.00$ &  $1.00$ \\
 $\upsilon$                     &                                &  $0.96$ &  $1.00$ &  $1.00$ &  $1.00$ &  $1.00$ &  $1.00$ \\
 $\mu$ acc                      &                                & $87.09$ & $88.41$ & $89.12$ & $89.38$ & $89.47$ & $89.54$ \\
 Linear acc                  &                                & $88.74$ & $89.18$ & $89.96$ & $90.02$ & $90.21$ & $90.12$ \\
 Linear acc w/o              &                                & $87.11$ & $87.94$ & $89.05$ & $89.49$ & $89.62$ & $89.45$ \\
 $L_{\mathrm{Info}}$                       & \cref{eq:test-self-sup-loss}    &  $1.23$ &  $1.77$ &  $2.37$ &  $3.02$ &  $3.69$ &  $4.37$ \\
 $d(\mathbf{f})$                      & \cref{eq:mean-supervised}       & $-1.72$ & $-1.77$ & $-1.81$ & $-1.82$ & $-1.83$ & $-1.84$ \\
 $^{\dagger}L_{\mathrm{Info}}$ bound          & \cref{eq:decompose-sup}         & $-0.22$ &  $0.35$ &  $0.99$ &  $1.65$ &  $2.34$ &  $3.02$ \\
 \hspace{2ex} $^{\dagger}$Collision &                                &  $1.49$ &  $2.13$ &  $2.80$ &  $3.48$ &  $4.16$ &  $4.85$ \\
 \hspace{2ex} $^{\dagger}L^{\mu}_{\mathrm{sup}}$    &                                &  $0.02$ &  $0.00$ &  $0.00$ &  $0.00$ &  $0.00$ &  $0.00$ \\
 \hspace{2ex} $^{\dagger}L^{\mu}_{\mathrm{sub}}$    &                                &  $0.00$ &  $0.00$ &  $0.00$ &  $0.00$ &  $0.00$ &  $0.00$ \\
 $L_{\mathrm{Info}}$ bound                 & \cref{eq:proposed-bound}        & $-0.39$ & $-0.10$ &  $0.22$ &  $0.55$ &  $0.89$ &  $1.23$ \\
 \hspace{2ex} $L^{\mu}_{\mathrm{sup}}$           &                                &  $0.57$ &  $0.61$ &  $0.63$ &  $0.64$ &  $0.64$ &  $0.65$ \\
 \hspace{2ex} $L^{\mu}_{\mathrm{sub}}$           &                                &  $0.02$ &  $0.00$ &  $0.00$ &  $0.00$ &  $0.00$ &  $0.00$ \\
 Collision bound             & \cref{eq:upper-bound-collision} &  $0.87$ &  $0.89$ &  $0.91$ &  $0.92$ &  $0.93$ &  $0.93$ \\
 $^{\dagger}$ $\ln L^{\mu}_{\mathrm{sup}}$ upper bound &                                &  $4.72$ &  $9.28$ & $37.06$ & $37.05$ & $37.04$ & $37.04$ \\
 $\ln L^{\mu}_{\mathrm{sup}}$ upper bound        &                                &  $1.52$ &  $1.60$ &  $1.72$ &  $1.83$ &  $1.93$ &  $2.02$ \\
\bottomrule
\end{tabular}
\end{table}

\end{document}